\documentclass{article}

    \PassOptionsToPackage{numbers, compress}{natbib}
 \usepackage[preprint]{neurips_2026}

\usepackage[utf8]{inputenc} 
\usepackage[T1]{fontenc}    
\usepackage{hyperref}       
\usepackage{url}            
\usepackage{booktabs}       
\usepackage{amsfonts}       
\usepackage{nicefrac}       
\usepackage{microtype}      
\usepackage{xcolor}         

\hypersetup{colorlinks=true,linkcolor=blue,citecolor=blue,urlcolor=magenta}

\usepackage{graphicx}
\usepackage{subcaption}
\usepackage{wrapfig}
\usepackage{adjustbox}
\usepackage{multirow}
\usepackage{multicol}
\usepackage[table]{xcolor}
\usepackage{colortbl}
\usepackage{pifont}
\usepackage[most]{tcolorbox}
\usepackage{amsmath}
\usepackage{amssymb}
\usepackage{mathtools}
\usepackage{amsthm}
\usepackage[capitalize,noabbrev]{cleveref}
\usepackage[textsize=tiny]{todonotes}

\definecolor{lightblue}{rgb}{0.2, 0.4, 0.8}
\definecolor{lightgrey}{gray}{0.92}
\newcommand{\greycell}[1]{\cellcolor{lightgrey}{#1}}

\newcommand{\cmark}{\textcolor{green!70!black}{\ding{51}}}
\newcommand{\xmark}{\textcolor{red}{\ding{55}}}

\theoremstyle{plain}
\newtheorem{theorem}{Theorem}[section]

\newtheorem{corollary}[theorem]{Corollary}
\theoremstyle{definition}
\newtheorem{definition}[theorem]{Definition}

\theoremstyle{remark}

\definecolor{modelgray}{RGB}{220,224,230}
\definecolor{aeagray}{RGB}{230,230,230}

\definecolor{goodgreen}{RGB}{24,138,67}
\definecolor{badred}{RGB}{200,45,45}

\newcommand{\modelrow}[1]{%
\rowcolor{modelgray}\multicolumn{9}{c}{#1}\\
}

\renewcommand{\greycell}[1]{\cellcolor{aeagray}{#1}}

\renewcommand{\cmark}{\textcolor{goodgreen}{\checkmark}}
\renewcommand{\xmark}{\textcolor{badred}{\ding{55}}}

\title{A Sober Look at Agentic Misalignment in \\ Automated Workflows}

\author{%
  \textbf{Wenqian Ye}\textsuperscript{1$\dagger$}, 
  \textbf{Bo Yuan}\textsuperscript{2}, 
  \textbf{Zhichao Xu}\textsuperscript{3}, 
  \textbf{Yijun Tian}\textsuperscript{4}, 
  \textbf{Yawei Wang}\textsuperscript{5}, \\
  \textbf{Henry Kautz}\textsuperscript{1}, 
  \textbf{Aidong Zhang}\textsuperscript{1$\dagger$} \\[0.5em]
  \textsuperscript{1}University of Virginia  \quad
  \textsuperscript{2}Georgia Institute of Technology  \quad
  \textsuperscript{3}University of Utah \\
  \textsuperscript{4}University of Notre Dame \quad
  \textsuperscript{5}George Washington University \\
  \texttt{$^\dagger$\{wenqian, aidong\}@virginia.edu}
}

\begin{document}

\maketitle

\begin{abstract}
We study a class of emergent misalignment in multi-agent systems (MAS), with a focus on automated workflows, which we refer to \textit{agentic misalignment}. Although these systems can solve complex tasks, they often fail because agents act according to implicit proxy utilities that do not align with the intended human goals. We formally define these behaviors and analyze them within a Bayesian framework, showing that generic utilities naturally lead to \textit{posterior collapse} of agents in automated workflows. To address this issue, we propose Agentic Evidence Attribution (AEA), a novel alignment paradigm that improves agent posteriors using context-specific evidence. AEA reasons over agent actions and provides structured evidence to correct misaligned behavior during collaboration. To better understand the role of evidence, we study two instantiations of AEA: self-reflection (internal evidence from the model) and weak-to-strong generalization (external evidence on
the agentic trajectory). We show that a small evidence model effectively aligns the MAS by providing orthogonal failure attribution. Our results clarify the sources of agentic misalignment in automated workflows and show that evidence-based alignment can effectively improve agent collaboration and leads to reliable multi-agent systems built on automated workflows.
\end{abstract}

\section{Introduction}

Multi-agent systems (MAS) with Large Language Models (LLM) are increasingly used for tasks that require decomposition \citep{prasad2024adapt,erdogan2025planandact}, tool use \citep{qin2024toolllm,yuan2025easytool}, and multi-step reasoning \citep{aksitov2024rest,plaat2025multi}. Most recent MAS leverage an orchestrator (or meta-agent) to construct various roles of agents and arrange the order of interactions through a technique called automated workflow generation \citep{li2024autoflow,zhangaflow}. This technique produces a general solution to build collaboration between agents to autonomously plan and act on individual tasks. Although these workflows allow agents to handle complex objectives, recent studies have also reported that MAS often fails in several systematic ways \citep{cemri2025multi,han2024llm,kim2025towards,lynch2025agentic}. These alignment failures arise even when each individual agent performs well in isolation, suggesting an emerging underlying coordination problem within the agentic workflow.

One qualitative perspective for interpreting these misalignments is that agents act according to a shared utility (or reward in some literature) shaped by generic pretraining and post-training, rather than the role-specific goals required by the workflow (Left in Figure \ref{fig:illustration}) \citep{korbak2022rl,ruadulescu2020multi,bengio2025superintelligent}. In automated workflows, agents observe only partial information about the generic objective and the behavior of other agents. As a result, the agent forms an implicit utility that is shaped by a generic training distribution rather than the intended role with respect to the agents themselves. This implicit utility can diverge from the objective of the whole workflow and lead to reward hacking \citep{skalse2022defining}, including misaligned behaviors such as effort minimization, incomplete hand-offs, and short-term decisions that potentially harm other agents' goals \citep{weng2024rewardhack,guo2024large}. These harmful behaviors make automated workflows difficult to control and limit their reliability in deployment.

\begin{wrapfigure}{r}{0.55\textwidth}
    \vskip -10pt
    \centering
    \includegraphics[width=0.55\textwidth]{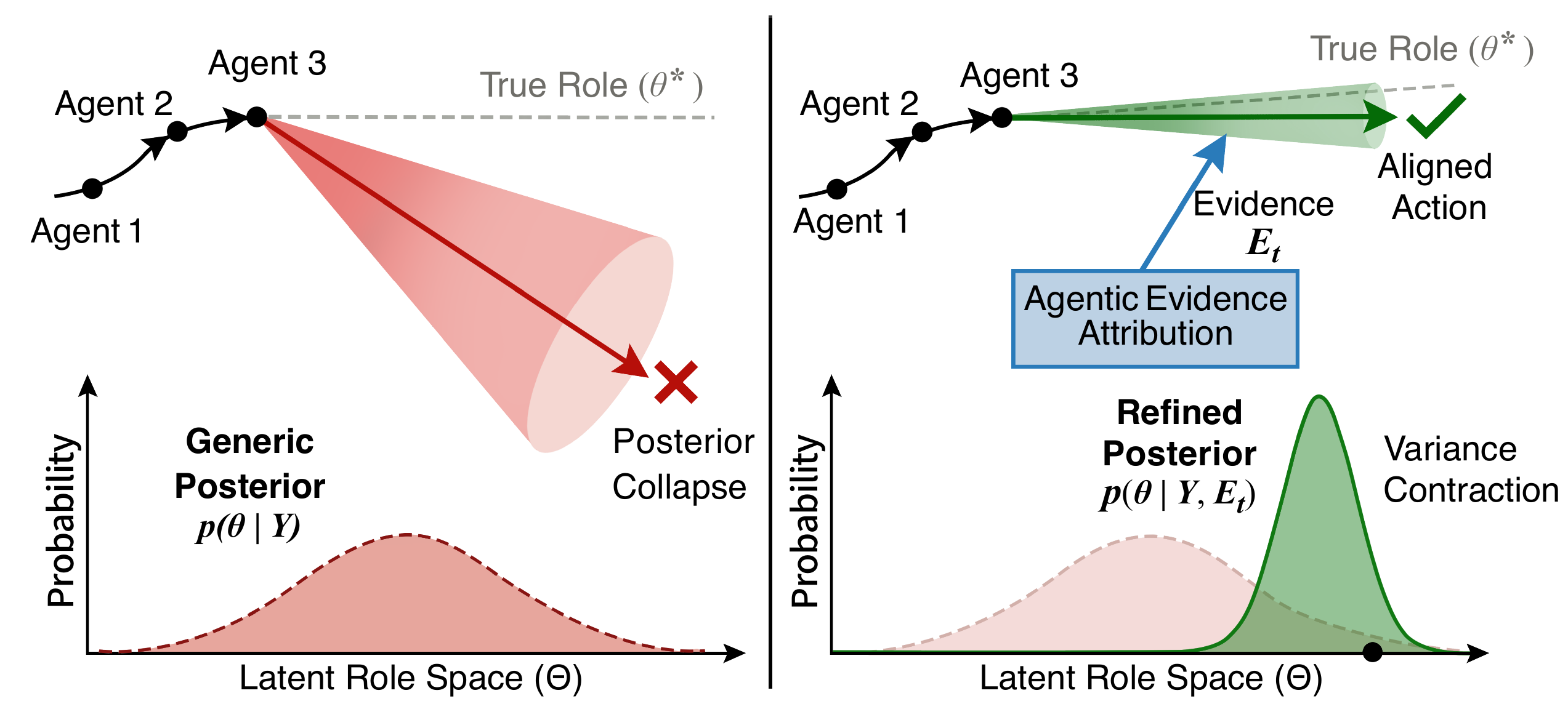}
    \caption{\textbf{Left:} Agents in the workflow suffer from \textit{posterior collapse}, where the generic posterior remains uncertain and fails to identify true role $\theta^*$, leading to system failure. \textbf{Right:} AEA injects structured evidence $E_t$ to induce variance contraction. This sharpens the agent's belief into an evidential posterior, ensuring the agent aligns with specific role.}
    \label{fig:illustration}
    \vskip -8pt
\end{wrapfigure}
To get a more rigorous understanding on the mechanistic of these behaviors, we systematically study this problem by modeling the multi-agent collaboration process as Bayesian inference over latent utility. We hypothesize that each individual agent infers a posterior utility based on its own evidence (from workflow system prompts, intermediate agent states, and observation of other agents) during workflow execution. When the evidence is sparse or generic, the agent's inferred utility will deviate from the role-specific objective, producing inconsistent behavior across steps. This perspective connects MAS failures to a form of utility misspecification: the workflow produces an incomplete likelihood model, and generic alignment training produces a dominant prior that does not separate the agents' roles. Both effects increase the chance of misalignment in workflows. 

Based on these insights, we propose \textit{Agentic Evidence Attribution} (AEA) to enhance evidence for better agentic posterior, a novel alignment paradigm that strengthens the inferred utility of each agent in additional structured evidence extracted from the workflow (shown in Figure \ref{fig:illustration} right). AEA analyzes turn-level traces from existing MAS trajectories, identifies which actions support or harm the task, and assigns context-aware and role-specific feedback. This evidence reduces the ambiguity in the agent's utility posterior and improves coordination. Because AEA operates on workflow traces, it serves as a flexible, model-agnostic framework that can align proprietary multi-agent systems without requiring access to their internal representations.

We evaluate a diverse set of automated multi-agent workflows on a wide range of challenging human-level tasks, including programming, data analysis, scientific reasoning, and competition-level mathematics. We study two instantiations of the proposed framework: \textit{Self-Reflection}, where the MAS use the base model to extract evidence from its own traces, and \textit{Weak-to-Strong} generalization, where a separate trained evidence model is used for agentic failure attribution and alignment. Across various models and benchmarks, AEA consistently reduces coordination failures and improves reliability relative to vanilla multi-agent baselines. Notably, the gains are not explained by additional computation or naive test-time scaling, but by correcting role-level decision errors that would otherwise propagate through the workflow. These results support the view that many multi-agent failures stem from utility misspecification rather than lack of capability, and that evidence-conditioned alignment is an effective mechanism for improving MAS automated workflows.

\vspace{-0.05in}
\section{Related Works}

\noindent \textbf{Agentic Misalignment.}
Misaligned behaviors in reinforcement learning arise when an agent exploits an imperfect reward function to maximize returns without achieving the human's intended goal \citep{wang2025persona}. This reward hacking phenomenon deeply aligns with the well-known Goodhart's Law \citep{karwowskigoodhart} and has been widely observed in LLM alignment, where preference datasets encode spurious cues such as verbosity or sycophancy that models can exploit \citep{yerectifying}. Recent work highlights that agentic systems also inherit these risks: goal misspecification and goal misgeneralization can cause agents to deviate from human intent under distribution shifts, producing systematic failures as capabilities grow \citep{bengio2025superintelligent,sohldickstein20230309}. These observations motivate a more careful study of reward-driven behavior in multi-agent settings. 

\noindent \textbf{Automated Workflows.}
Modern agentic systems rely on automated workflow generation to assign roles and determine the sequence of agent actions. These workflows are produced through prompting or high-level planners (i.e., orchestrators/meta-agents), which implicitly induce a posterior over each agent's utility \citep{mackay}. However, this posterior could be unstable: small prompt changes or planning errors can cause agents to minimize effort, pass insufficient information, or misinterpret their responsibilities \citep{cemri2025multi}. Because the underlying reward is generic rather than role-specific, agents optimize for signals that do not match the workflow's true intent, leading to reward hacking across the steps of the workflow. This gap between automatically generated roles and the generic utility that agents optimize remains a central barrier to trustworthy and reliable multi-agent systems. In this paper, we focus mainly on the \textbf{centralized MAS} discussed in~\citet{kim2025towards} where we have an orchestrator to generate the subagents, since it represents the main stream of current deployed MAS.

\noindent \textbf{Reward Modeling for Agents.}
Reward models (RMs) play a critical role in aligning LLMs with human preferences. Recent work frames alignment as Bayesian inference, where an RM provides evidence that shapes the learned posterior \citep{korbak2022rl,wu2025optimas}. Existing RMs, whether scalar or generative, assume a single shared objective and do not account for the role-specific structure of multi-agent workflows, which limits their ability to prevent coordination failures. Our work addresses this gap by producing context-aware and agent-specific signals. One related work \citep{poddar2024personalizing}, namely Variational Preference Learning (VPL) infers latent user context to model personalized preferences. However, VPL requires architecture-level integration (training specific encoders/decoders) to learn the latent distributions of LLMs, whereas AEA is workflow-agnostic and operates purely in the textual context/prompting space accessible to LLMs. Crucially, our theoretical framework fundamentally reinterprets this dynamic: rather than treating the reward signal as the optimization objective itself, we model it as \textit{external observational evidence} required to contract the variance of the agent's \textit{internal latent utility}, thereby resolving the posterior collapse inherent to pre-trained priors.


\section{A Sober Look at Agentic Misalignment}
\label{sec:theory}

\subsection{Problem Formulation}

We formalize the automated workflow as a partially observed multi-agent Markov Decision Process (POMDP) \citep{spaan2012partially,zhang2025which} augmented with latent role variables. The system is defined by the tuple $\mathcal{M} = \langle \mathcal{N}, \mathcal{S}, \mathcal{A}, \mathcal{T}, \Theta,   \mathcal{U} \rangle$. Here, $\mathcal{N} = \{1, \dots, N\}$ represents the set of agent roles, while $\mathcal{S}$ denotes the global state space that consists of the task context, dialogue history, and tool outputs. The joint action space is given by $\mathcal{A} = \times_{i \in \mathcal{N}} \mathcal{A}_i$, where $\mathcal{A}_i$ is the specific action space for agent $i$. $\mathcal{T}: \mathcal{S} \times \mathcal{A} \to \Delta(\mathcal{S})$ denotes the state transition function with $\Delta(\mathcal{S})$ as the probability simplex over states. To define role-specific behaviors, we introduce $\Theta = \{\theta_1, \dots, \theta_K\}$ as a finite set of \textit{latent utility types} (e.g., ``strict verifier'', ``accurate approximator''). Lastly, $\mathcal{U}: \mathcal{S} \times \mathcal{A} \times \Theta \to \mathbb{R}$ represents the utility function conditioned on the roles.

At any time step $t$, a single agent $i = \psi(t)$ is active according to the workflow orchestration. The agent does not observe the true role intention directly. Instead, the ground-truth role assignment is a random variable $\boldsymbol{\theta}_i \in \Theta$, following a prior distribution $P(\boldsymbol{\theta})$. The \textit{oracle} (ideal) policy maximizes the true utility associated with this latent type:
\begin{equation}
    \pi^*_i(s_t; \theta) \in \arg\max_{a \in \mathcal{A}_i} \; \mathcal{U}(s_t, a; \theta).
\end{equation}

\begin{definition}[Agentic Misalignment via Decisive Errors]
    Let $\tau$ be a trajectory and $Z(\tau) \in \{0,1\}$ be a binary failure indicator ($0$ means no failure). A time step $t$ is a \textit{Decisive Error} if $Z(\tau)=1$, but there exists an action $\tilde{a}_t \in \mathcal{A}_i$ such that the modified trajectory $\tilde{\tau}$ satisfies $Z(\tilde{\tau})=0$. \textit{Agentic Misalignment} occurs when the agent's policy $\hat{\pi}_i$ selects $a_t$ (the error) instead of $\tilde{a}_t$ because $a_t$ maximizes the expected utility under the \textit{generic} posterior, distinct from the specific role type $\boldsymbol{\theta}_i$:
    \begin{equation}
        a_t \in \arg\max_{a} \mathbb{E}_{\theta \sim P(\cdot \mid Y)} [\mathcal{U}(s_t, a; \theta)],
    \end{equation}
     where $a_t \neq \pi^*_i(s_t; \boldsymbol{\theta}_i)$ and $Y$ is the evidence of the agent.
\end{definition}

\subsection{Misalignment as a Role Inference Problem}

The core obstacle is that agent $i$ should infer the specific instantiation $\boldsymbol{\theta}_i$ from the available evidence $Y_t$ at time $t$. We model this as Bayesian inference where the agent maintains a belief state $b_t(\theta) = P(\boldsymbol{\theta}_i = \theta \mid Y_t)$. The failure of multi-agent systems stems from \textit{posterior collapse}, where the generic pre-training prior dominates the weak signal provided by the workflow prompt.

We quantify the divergence between two distributions using the Total Variation (TV) distance, $\| P - Q \|_{\mathrm{TV}} = \frac{1}{2}\sum |P(x)-Q(x)|$. To rigorously bound the posterior update, we introduce a stability condition on the marginal likelihood (evidence).

\begin{theorem}[$\epsilon$-Stability of Role Posteriors]
\label{thm:role-collapse}
    Let $i, j \in \mathcal{N}$ be two distinct roles. Assume the priors are $\epsilon_{\pi}$-close: $\| P(\cdot \mid i) - P(\cdot \mid j) \|_{\mathrm{TV}} \leq \epsilon_{\pi}$ and the likelihoods are $\epsilon_{\ell}$-close. Furthermore, assume the workflow evidence $Y$ is sufficiently informative such that the marginal likelihood is lower-bounded by $\zeta > 0$ for both roles. Then, the posterior distributions are $\delta$-close, where $\delta = \kappa (2\epsilon_{\pi} + \epsilon_{\ell})$ and $\kappa \propto \zeta^{-1}$. Consequently, the probability that agents $i$ and $j$ select distinct actions is upper-bounded:
    \begin{equation}
        P(\hat{\pi}_i(Y) \neq \hat{\pi}_j(Y)) \leq \delta.
    \end{equation}
    If the oracle requirements differ ($\pi^*_i \neq \pi^*_j$), the probability of misalignment is lower-bounded by $1 - \delta$.
\end{theorem}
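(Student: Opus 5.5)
We will prove the three claims in order: a perturbation bound for Bayes' rule in total variation, a coupling argument for the action-disagreement bound, and then a short contrapositive for misalignment. Fix the observed evidence $Y$. Write the two posteriors as $Q_r(\theta) = P(\theta \mid r)\,\ell_r(Y\mid\theta)/Z_r$ for $r\in\{i,j\}$, with marginal likelihood $Z_r = \sum_\theta P(\theta\mid r)\,\ell_r(Y\mid\theta)$. We read ``likelihoods are $\epsilon_\ell$-close'' as $\sum_\theta P(\theta\mid j)\,|\ell_i(Y\mid\theta)-\ell_j(Y\mid\theta)| \le 2\epsilon_\ell$. We also normalize likelihoods so that $0\le \ell_r\le 1$, as for a discrete observation. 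The informativeness assumption gives $Z_i, Z_j\ge\zeta$.

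\textbf{Step 1: the posterior bound.} Split the numerator difference as
\begin{equation*}
P(\theta\mid i)\ell_i - P(\theta\mid j)\ell_j = \bigl(P(\theta\mid i)-P(\theta\mid j)\bigr)\ell_i + P(\theta\mid j)\bigl(\ell_i-\ell_j\bigr).
\end{equation*}
Summing absolute values gives $\sum_\theta |P(\theta\mid i)\ell_i-P(\theta\mid j)\ell_j| \le 2\epsilon_\pi + 2\epsilon_\ell$. The same bound controls $|Z_i-Z_j|$, since $Z_i - Z_j$ is just the sum of these differences. Next use the standard normalization inequality
\begin{equation*}
\Bigl\|\tfrac{f}{Z_f}-\tfrac{g}{Z_g}\Bigr\|_1 \le \frac{\|f-g\|_1 + |Z_f-Z_g|}{\max(Z_f,Z_g)} \le \frac{2\|f-g\|_1}{\zeta}.
\end{equation*}
This yields $\|Q_i-Q_j\|_{\mathrm{TV}} \le \zeta^{-1}\cdot c\,(\epsilon_\pi+\epsilon_\ell)$ for an absolute constant $c$. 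Absorbing constants into $\kappa\propto\zeta^{-1}$ gives $\delta=\kappa(2\epsilon_\pi+\epsilon_\ell)$. The weights in the statement, $2\epsilon_\pi$ against $\epsilon_\ell$, are dominated by this bound once $\kappa$ is chosen as, say, $4/\zeta$, so we do not need to match them exactly.

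\textbf{Step 2: the action-disagreement bound.} Here we model $\hat\pi_r(Y)$ as the action induced by a posterior sample: draw $\theta\sim Q_r$ and play $\pi^*(s;\theta)$, which is a Thompson-style reading of the definition of Agentic Misalignment. Under the maximal coupling of $Q_i$ and $Q_j$, the sampled types coincide with probability $1-\|Q_i-Q_j\|_{\mathrm{TV}}$. On that event the actions coincide. Hence $P(\hat\pi_i(Y)\ne\hat\pi_j(Y))\le\delta$.

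This modeling choice is the main obstacle. If $\hat\pi_r$ were the deterministic expected-utility maximizer, an arbitrarily small TV gap could flip an argmax. We would then need a margin assumption on the utility gaps, and we would state that alternative only as a remark.

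\textbf{Step 3: the misalignment bound.} Suppose $\pi^*_i\ne\pi^*_j$ at the relevant state. Then agents $i$ and $j$ cannot both act correctly unless they choose different actions. So the event that both are aligned is contained in $\{\hat\pi_i\ne\hat\pi_j\}$, which has probability at most $\delta$. Therefore, with probability at least $1-\delta$, at least one of the two agents takes an action differing from its oracle. This is the claimed lower bound on the probability of misalignment. If the argmax selects a decisive step, this is exactly a decisive error in the sense of the Definition.
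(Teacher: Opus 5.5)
Your proposal is correct and follows essentially the same route as the paper. You split $P(\theta\mid i)\ell_i-P(\theta\mid j)\ell_j$ into a prior-difference and a likelihood-difference term, control $|Z_i-Z_j|$ by the same bound, apply a ratio-perturbation inequality using $Z_r\ge\zeta$, and pass from posterior TV to action disagreement through the coupling characterization of TV. Your version is more careful than the paper's in three places. First, your normalization inequality keeps the factor $\|g\|_1=Z_g$ that the paper's term $|Z_j-Z_i|/(Z_iZ_j)$ drops; taken literally, the paper's term would give $\kappa\propto\zeta^{-2}$, so your version is what actually yields $\kappa\propto\zeta^{-1}$. Second, you make explicit the posterior-sampling and maximal-coupling model that the paper's appeal to ``the definition of TV distance'' silently needs; the paper's own proof of Corollary~\ref{cor:error_reduction} uses the same sampling reading. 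Third, you actually prove the $1-\delta$ bound, read as ``at least one of the two agents is misaligned,'' which the paper states but never argues.
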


This result implies that, without distinct external evidence, agents inevitably collapse toward a mean generic behavior. To break this symmetry, we should introduce a principled mechanism that explicitly conditions the posterior on role-specific artifacts. This motivates our proposed framework. 

    
    

\section{Agentic Evidence Attribution}
\label{sec:aea}
In this section, we present Agentic Evidence Attribution (AEA), an analytical and practical framework for correcting agentic misalignment. We first establish an analytical framework in Section~\ref{sec:analytical_framework}, defining the limits of alignment via information theory. Section~\ref{sec:posterior_contraction} details the mechanism of posterior contraction, proving how evidence reduces variance in the agent's latent belief in its role. Finally, Section~\ref{sec:scalable_aea} proposes scalable instantiations of AEA using learned evidence models to satisfy these theoretical conditions in practice.

\subsection{An Analytical Framework}
\label{sec:analytical_framework}

We formalize Agentic Evidence Attribution (AEA) as the injection of the \textbf{extraction of latent evidence} from the system's own history. AEA leverages the information asymmetry between the runtime evidence $Y_t$ from partial observation and the complete computational trace $\tau_{<t}$.

Let $E_t = \mathcal{F}(\tau_{<t})$ be an additional evidence derived from the trajectory history using a function $\mathcal{F}$. The agent's evidence transforms from $Y_t$ to $Y'_t = (Y_t, E_t)$. We analyze the theoretical limits of this intervention using information theory. Let $L^* = \pi^*_i(s_t; \boldsymbol{\theta}_i)$ be the discrete label of the optimal action.

\begin{theorem}[Lower Bound on Misalignment Error]
\label{thm:fano}
    Assume the action space has cardinality $|\mathcal{A}_i| = M$. Let $\hat{a}(Y)$ be any decision rule conditioned on evidence $Y$. The probability of decisive error is lower-bounded by Fano's Inequality:
    \begin{equation}
        P_e \geq 1 - \frac{I(L^*; Y) + \log 2}{\log M},
    \end{equation}
    where $I(L^*; Y)$ is the mutual information between the true optimal action label and the evidence.
\end{theorem}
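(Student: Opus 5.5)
The argument is a direct application of Fano's inequality to the Markov chain $L^* \to Y \to \hat{a}(Y)$. Here $L^* = \pi^*_i(s_t;\boldsymbol{\theta}_i)$ is a discrete random variable taking values in $\mathcal{A}_i$, with $|\mathcal{A}_i| = M$. Its randomness comes from the latent role $\boldsymbol{\theta}_i \sim P(\boldsymbol{\theta})$ and from the state. The estimate $\hat{a}(Y)$ is any (possibly randomized) function of the evidence alone. I identify the decisive-error event with $\{\hat{a}(Y) \neq L^*\}$, so $P_e = P(\hat{a}(Y)\neq L^*)$.

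The key steps are as follows.

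\textbf{Step 1 (Fano).} Let $E = \mathbf{1}\{\hat{a}\neq L^*\}$. Expanding $H(E, L^* \mid \hat{a})$ by the chain rule in two orders gives
\[
H(L^*\mid \hat{a}) \le H(E) + P_e \log(M-1) \le \log 2 + P_e \log M .
\]
This uses $H(E)\le \log 2$ and the following case split on $E$: when $E=0$, $L^*$ is determined by $\hat{a}$; when $E=1$, $L^*$ ranges over at most $M-1$ values.

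\textbf{Step 2 (data processing).} Because $\hat{a}$ depends on $L^*$ only through $Y$, we have $I(L^*;\hat{a}) \le I(L^*;Y)$. Equivalently, $H(L^*\mid \hat{a}) \ge H(L^*) - I(L^*;Y)$.

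\textbf{Step 3 (combine).} Steps 1 and 2 give $P_e \ge \bigl(H(L^*) - I(L^*;Y) - \log 2\bigr)/\log M$.

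The main obstacle is passing from $H(L^*)$ to $\log M$, which the stated bound $1 - (I(L^*;Y)+\log 2)/\log M$ implicitly requires. In general $H(L^*) \le \log M$, so the inequality runs the wrong way unless $L^*$ is uniform on $\mathcal{A}_i$. I would address this in one of two ways. The first is to state explicitly that the bound is taken under a uniform (least-favourable) prior over optimal actions, as in the standard minimax use of Fano; this is natural here, since the role prior $P(\boldsymbol{\theta})$ is a modelling choice and the lower bound is meant to be worst-case over role assignments. The second is to keep the general form with $H(L^*)$ and remark that it reduces to the displayed inequality when $H(L^*) = \log M$. I would adopt the first, adding the uniformity assumption as a standing hypothesis.

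Finally, I would note the consequence that motivates AEA. Replacing $Y$ by $Y' = (Y, E_t)$ can only increase the mutual information, since $I(L^*;Y') \ge I(L^*;Y)$ by the chain rule. The lower bound on $P_e$ therefore weakly decreases, which matches the posterior-collapse picture of Theorem~\ref{thm:role-collapse}: when $Y$ is nearly role-uninformative, $I(L^*;Y)\approx 0$ and the error is forced close to $1 - \log 2/\log M$.
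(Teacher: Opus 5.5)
Your proposal is correct and follows essentially the same route as the paper: Fano's inequality along $L^* \to Y \to \hat{a}(Y)$, the bounds $H_b(P_e)\le\log 2$ and $\log(M-1)<\log M$, and a uniform prior on $L^*$ to replace $H(L^*)$ by $\log M$. Your explicit data-processing step $I(L^*;\hat{a})\le I(L^*;Y)$ is the step the paper leaves implicit by writing Fano directly with $H(L^*\mid Y)$, and the paper uses the same uniformity assumption inside its proof (``representing the worst-case uncertainty''). Making it a standing hypothesis is the right call, since without it the displayed bound fails: a deterministic $L^*$ with $M\ge 3$ allows $P_e=0$ while the right-hand side is positive.
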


This lower bound highlights a fundamental limit: no alignment algorithm can succeed without sufficient mutual information between the evidence and the optimal action. This motivates the design of AEA to explicitly maximize this information channel.


\begin{corollary}[Information Gain Condition]
\label{cor:info_gain}
    Let $P_e(Y)$ and $P_e(Y')$ be the probabilities of decisive error under baseline and AEA evidence, respectively. A necessary condition for AEA to strictly reduce the probabilities on misalignment (i.e., $P_e(Y') < P_e(Y)$) is 
    \begin{equation}
        I(L^*; E_t \mid Y_t) > 0
    \end{equation}
\end{corollary}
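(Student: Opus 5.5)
The plan is to prove the contrapositive: if $I(L^*; E_t \mid Y_t) = 0$, then $P_e(Y') \ge P_e(Y)$, so AEA cannot strictly reduce the decisive-error probability. To make the statement meaningful, I read $P_e(Y)$ and $P_e(Y')$ as the \emph{minimal} (Bayes) probabilities of decisive error over all decision rules $\hat a(\cdot)$ measurable with respect to the given evidence, i.e.\ $P_e(Y) = \inf_{\hat a} P(\hat a(Y) \neq L^*)$. This is the quantity that Theorem~\ref{thm:fano} lower-bounds. For a fixed, arbitrarily chosen policy the claim fails: a bad rule on $Y$ can always be beaten by a better rule on $Y'$ even when $E_t$ carries no information. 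So fixing this interpretation is the first and, I expect, the main conceptual step.

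With this reading, the argument is as follows. Since $\Theta$ and $\mathcal{A}_i$ are finite, mutual information is nonnegative and equals zero exactly when the variables are conditionally independent. Hence $I(L^*; E_t \mid Y_t)=0$ gives $L^* \perp E_t \mid Y_t$, that is, $P(L^* = a \mid Y_t, E_t) = P(L^* = a \mid Y_t)$ almost surely. The Bayes-optimal rule under $Y'$ is the MAP rule $\hat a(Y') = \arg\max_a P(L^*=a \mid Y_t, E_t)$, with error $1 - \mathbb{E}[\max_a P(L^*=a\mid Y_t,E_t)]$. By the conditional independence this equals $1 - \mathbb{E}[\max_a P(L^*=a\mid Y_t)] = P_e(Y)$.

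A variant avoids appealing to MAP optimality. Any rule $g(Y_t,E_t)$ can be simulated by a randomized rule that observes $Y_t$, draws $E_t$ from $P(E_t\mid Y_t)$, and outputs $g$. Conditional independence makes the joint law of $(L^*, \text{output})$ identical in the two cases. Randomized rules cannot beat the deterministic Bayes rule, so $P_e(Y') \ge P_e(Y)$.

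I would add a remark linking the result to Theorem~\ref{thm:fano} through the chain rule $I(L^*; Y') = I(L^*; Y_t) + I(L^*; E_t \mid Y_t)$. When the conditional term vanishes, the Fano lower bound for $Y'$ coincides with that for $Y$, which matches the corollary's information-theoretic framing. I would stress, however, that the Fano bound is only a lower bound, so equality of the bounds by itself does not prove the corollary; the decisive step is the posterior-equality argument above.

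Finally, I would note that the condition is necessary but not sufficient. Positive conditional information need not change the MAP decision, so $P_e(Y')$ can equal $P_e(Y)$ even when $I(L^*; E_t\mid Y_t)>0$.
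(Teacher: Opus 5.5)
Your proof is correct, and it takes a genuinely different and more rigorous route than the paper.

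\textbf{The paper's route.} It argues entirely through the bound of Theorem~\ref{thm:fano}. It applies the chain rule $I(L^*;Y,E)=I(L^*;Y)+I(L^*;E\mid Y)$ and observes that the Fano lower bound for $Y'$ lies strictly below the one for $Y$ if and only if $I(L^*;E\mid Y)>0$. It then reads this as the condition for $P_e$ itself to decrease. That is exactly the step your closing remark warns about. Since $P_e$ only sits above the Fano bound, equal bounds do not exclude $P_e(Y')<P_e(Y)$. So the paper's argument really characterizes when the \emph{bound} can improve, not the error. It also inherits the uniform-prior assumption $H(L^*)=\log M$ from the proof of Theorem~\ref{thm:fano}.

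\textbf{Your route.} You fix $P_e$ as the Bayes risk and use $I(L^*;E_t\mid Y_t)=0 \Leftrightarrow L^*\perp E_t\mid Y_t$. From this you conclude directly that the posterior of $L^*$ given $(Y_t,E_t)$ equals its posterior given $Y_t$, so the MAP errors coincide. Your simulation variant reaches the same conclusion: every rule on $Y'$ is matched in error by a randomized rule on $Y_t$. This actually proves the stated necessity and needs no assumption on the law of $L^*$. It also makes explicit the interpretation without which the corollary is false, namely that it can fail for a fixed suboptimal policy.

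\textbf{Comparison.} The paper's route contributes only the information-theoretic framing that ties the corollary to Fano, which you recover in your remark. Yours is a valid proof of the statement. Your observation that the condition is necessary but not sufficient is also correct.
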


Since the base model often suffers from $I(L^*; Y) \approx 0$ due to posterior collapse, this condition necessitates introducing an external parameterization $\phi$ such that the mutual information $I(L^*; Y') > 0$. Consequently, the success of alignment depends on the quality of the extraction function $\mathcal{F}$. If $\mathcal{F}$ yields uninformative evidence, misalignment persists regardless of the base model's capabilities, necessitating a learned approach for evidence extraction.


This condition clarifies that AEA is not a manual heuristic. It represents a valid information channel only if the extraction function $\mathcal{F}$ captures correlations between past traces and current optimal actions that are invisible in the baseline prompt.

\subsection{Mechanism via Posterior Contraction}
\label{sec:posterior_contraction}

We now instantiate the utility function $\mathcal{U}$ as a linear model to demonstrate the mechanism of AEA concretely. Let $\phi(s, a) \in \mathbb{R}^d$ be a feature vector of the state-action pair. Let the latent type $\boldsymbol{\theta}$ correspond to a weight vector $\mathbf{w} \in \mathbb{R}^d$, such that $\mathcal{U}(s, a; \mathbf{w}) = \mathbf{w}^\top \phi(s, a)$.

We model the agent's prior belief as a Gaussian: $\mathbf{w} \sim \mathcal{N}(\boldsymbol{\mu}_0, \Sigma_0)$. The baseline evidence $Y$ updates this to a posterior $\mathcal{N}(\boldsymbol{\mu}_Y, \Sigma_Y)$. We model the AEA evidence $E_t$ as a noisy linear observation of the true utility weights (e.g., passing a unit test confirms alignment with specific utility dimensions):
\begin{equation}
    E_t = \mathbf{H} \mathbf{w} + \boldsymbol{\eta},   \boldsymbol{\eta} \sim \mathcal{N}(0, \mathbf{R}),
\end{equation}
where $\mathbf{H}$ projects the latent utility onto observable verification constraints. and $\mathbf{R}$ is the covariance of observation noise.

\begin{theorem}[Covariance Contraction under Verifiable Evidence]
\label{thm:covariance}
    Under the linear-Gaussian assumption, the posterior covariance $\Sigma_{Y'}$ after observing AEA evidence $E_t$ satisfies the Loewner partial order $\Sigma_{Y'} \preceq \Sigma_Y$. Specifically, the precision matrix increases by the Fisher information of the evidence:
    \begin{equation}
        \Sigma_{Y'}^{-1} = \Sigma_Y^{-1} + \mathbf{H}^\top \mathbf{R}^{-1} \mathbf{H}.
    \end{equation}
\end{theorem}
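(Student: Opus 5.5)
The plan is the standard Bayesian linear-Gaussian update, carried out in information (precision) form. Three modelling conventions have to be fixed first. The prior after baseline evidence is $\mathbf{w}\mid Y \sim \mathcal{N}(\boldsymbol{\mu}_Y, \Sigma_Y)$ with $\Sigma_Y \succ 0$. The noise $\boldsymbol{\eta}$ is independent of $\mathbf{w}$ and of $Y$. The noise covariance satisfies $\mathbf{R} \succ 0$. Together these make $E_t \mid \mathbf{w}, Y \sim \mathcal{N}(\mathbf{H}\mathbf{w}, \mathbf{R})$, so that $Y' = (Y, E_t)$ yields the posterior $p(\mathbf{w}\mid Y') \propto p(E_t\mid \mathbf{w})\, p(\mathbf{w}\mid Y)$.

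\textbf{Step 1: read off the posterior precision.} Take the logarithm of this product and keep only the terms in $\mathbf{w}$:
\begin{equation*}
-\tfrac12 (\mathbf{w}-\boldsymbol{\mu}_Y)^\top \Sigma_Y^{-1}(\mathbf{w}-\boldsymbol{\mu}_Y) - \tfrac12 (E_t-\mathbf{H}\mathbf{w})^\top \mathbf{R}^{-1}(E_t-\mathbf{H}\mathbf{w}).
\end{equation*}
This expression is a quadratic in $\mathbf{w}$, so the posterior is Gaussian. Its quadratic coefficient is $-\tfrac12\mathbf{w}^\top(\Sigma_Y^{-1} + \mathbf{H}^\top\mathbf{R}^{-1}\mathbf{H})\mathbf{w}$. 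Completing the square therefore gives the claimed precision identity $\Sigma_{Y'}^{-1} = \Sigma_Y^{-1} + \mathbf{H}^\top \mathbf{R}^{-1}\mathbf{H}$. As a by-product it also gives the mean $\boldsymbol{\mu}_{Y'} = \Sigma_{Y'}(\Sigma_Y^{-1}\boldsymbol{\mu}_Y + \mathbf{H}^\top\mathbf{R}^{-1}E_t)$, though the mean is not needed for the theorem. To justify calling $\mathbf{H}^\top\mathbf{R}^{-1}\mathbf{H}$ the Fisher information, compute $-\nabla^2_{\mathbf{w}}\log p(E_t\mid\mathbf{w})$ directly. It is constant in $\mathbf{w}$ and equals this matrix.

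\textbf{Step 2: derive the Loewner ordering.} The matrix $\mathbf{H}^\top\mathbf{R}^{-1}\mathbf{H}$ is positive semidefinite, since $x^\top \mathbf{H}^\top\mathbf{R}^{-1}\mathbf{H} x = \|\mathbf{R}^{-1/2}\mathbf{H}x\|^2 \ge 0$. Hence $\Sigma_{Y'}^{-1} \succeq \Sigma_Y^{-1} \succ 0$. Because inversion is operator-antitone on positive definite matrices, this gives $\Sigma_{Y'} \preceq \Sigma_Y$. This antitonicity is the only step that needs an actual argument. The route I would use is to write $A = \Sigma_{Y'}^{-1}$ and $B = \Sigma_Y^{-1}$, with $A \succeq B \succ 0$. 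Then $B^{-1/2} A B^{-1/2} \succeq I$, so all of its eigenvalues are at least $1$. Consequently its inverse $B^{1/2}A^{-1}B^{1/2}$ is $\preceq I$, and conjugating by $B^{-1/2}$ yields $A^{-1} \preceq B^{-1}$.

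As an alternative and a cross-check, I would use the Kalman/Woodbury form:
\begin{equation*}
\Sigma_{Y'} = \Sigma_Y - \Sigma_Y\mathbf{H}^\top(\mathbf{H}\Sigma_Y\mathbf{H}^\top + \mathbf{R})^{-1}\mathbf{H}\Sigma_Y.
\end{equation*}
The subtracted term is visibly positive semidefinite, which makes the contraction explicit.

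The main obstacle is not computational but lies in the modelling assumptions. Everything depends on the conditional independence of $\boldsymbol{\eta}$ from $Y$ and $\mathbf{w}$, and on $E_t$ entering as a fresh likelihood factor. In the AEA setting, however, $E_t = \mathcal{F}(\tau_{<t})$ may share information with $Y_t$. I would state explicitly that the theorem assumes $E_t \perp Y \mid \mathbf{w}$. Any residual dependence would require replacing $\mathbf{R}$ by the conditional noise covariance given $Y$.
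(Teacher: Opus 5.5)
Your proof is correct and follows the same route as the paper: read off the posterior precision $\Sigma_Y^{-1} + \mathbf{H}^\top\mathbf{R}^{-1}\mathbf{H}$ from the linear-Gaussian update, show $\mathbf{H}^\top\mathbf{R}^{-1}\mathbf{H} \succeq 0$, and conclude $\Sigma_{Y'} \preceq \Sigma_Y$ because matrix inversion reverses the Loewner order. You go a bit further than the paper, which only cites the precision update and inversion monotonicity: you derive the precision identity by completing the square, prove the antitonicity step explicitly, and state the conditional independence $E_t \perp Y \mid \mathbf{w}$ that the paper leaves implicit.
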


As visualized in Figure \ref{fig:illustration}, this covariance contraction narrows the posterior distribution (green curve), significantly reducing the probability mass assigned to misaligned actions compared to the generic posterior.

Mathematically, this confirms that verifiable evidence reduces the variances in the agent's latent belief space. By injecting $E_t$, we sharpen the distribution around the true role parameter, reducing the likelihood of drawing a misaligned utility vector.

\begin{corollary}[Reliability Improvement via Variance Reduction]
\label{cor:error_reduction}
    Consider two actions $a_1, a_2$ with feature difference $\delta = \phi(s, a_1) - \phi(s, a_2)$ under state $s$. If $a_1$ is optimal ($\mathbf{w}^\top \delta > 0$) and the evidence $E_t$ is unbiased (does not decrease the margin mean $\mu$), then $P_e(Y') \leq P_e(Y)$.
\end{corollary}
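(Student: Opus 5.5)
The plan is to reduce the comparison of $P_e(Y')$ and $P_e(Y)$ to a comparison of two Gaussian tail probabilities for a scalar margin, then combine the variance contraction of \cref{thm:covariance} with the unbiasedness assumption.

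\textbf{Step 1: reduce to a scalar margin.} With only $a_1$ and $a_2$ in play, a decisive error occurs exactly when the agent's sampled (or posterior-drawn) utility vector prefers $a_2$, i.e.\ when the scalar margin $m = \mathbf{w}^\top \delta$ satisfies $m \le 0$. Under the linear-Gaussian model, $\mathbf{w} \mid Y \sim \mathcal{N}(\boldsymbol{\mu}_Y, \Sigma_Y)$, so $m \mid Y \sim \mathcal{N}(\mu, \sigma_Y^2)$ with $\mu = \boldsymbol{\mu}_Y^\top \delta$ and $\sigma_Y^2 = \delta^\top \Sigma_Y \delta$. Likewise $m \mid Y' \sim \mathcal{N}(\mu', \sigma_{Y'}^2)$ with $\sigma_{Y'}^2 = \delta^\top \Sigma_{Y'} \delta$. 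The two error probabilities are therefore
\[
P_e(Y) = \Phi\!\left(-\mu/\sigma_Y\right), \qquad P_e(Y') = \Phi\!\left(-\mu'/\sigma_{Y'}\right).
\]

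\textbf{Step 2: the variance contracts.} By \cref{thm:covariance}, $\Sigma_{Y'} \preceq \Sigma_Y$. Applying the Loewner order to the vector $\delta$ gives $\sigma_{Y'}^2 \le \sigma_Y^2$.

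\textbf{Step 3: the mean does not shrink, and combine.} Unbiasedness gives $\mu' \ge \mu$. If $\mu \ge 0$, then $\mu'/\sigma_{Y'} \ge \mu/\sigma_{Y'} \ge \mu/\sigma_Y$. Because $\Phi$ is monotone, this yields $P_e(Y') \le P_e(Y)$.

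\textbf{Main obstacle: the sign of the prior mean.} The delicate point is exactly the requirement $\mu \ge 0$. When $\mu < 0$, shrinking the variance pushes mass \emph{away} from the correct side, so the conclusion can fail; the statement as written does not cover this case.

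I would handle it in one of two ways:
\begin{itemize}
\item Read ``unbiased'' as meaning the posterior mean margin is consistent with the true sign, i.e.\ $\mu \ge 0$ since $\mathbf{w}^\top\delta > 0$. This fits the posterior-concentration picture, in which the evidence centers the belief on the true $\mathbf{w}$.
\item Alternatively, take the error as an expectation over the true $\mathbf{w}$ and argue that contraction toward the truth reduces the mass on $\{m \le 0\}$.
\end{itemize}
I would adopt the first reading, state it explicitly as the operative assumption, and note the second as a remark. Degenerate cases with $\sigma_{Y'} = 0$ are dispatched separately: there the error probability is $0$ whenever $\mu' > 0$.
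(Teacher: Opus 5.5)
Your proposal is correct and follows essentially the same route as the paper. Both reduce to the scalar Gaussian margin $\mathbf{w}^\top\delta$, use $\Sigma_{Y'}\preceq\Sigma_Y$ along $\delta$ to get $\sigma_{Y'}\le\sigma_Y$, and conclude by monotonicity of $\Phi$; the paper handles the sign issue you flag the same way, by building $\mu_{D'}\ge\mu_D>0$ into its reading of ``unbiased'' evidence. Your degenerate case $\sigma_{Y'}=0$ never arises, since $\Sigma_{Y'}=(\Sigma_Y^{-1}+\mathbf{H}^\top\mathbf{R}^{-1}\mathbf{H})^{-1}\succ 0$ and $\delta\neq 0$ (because $\mathbf{w}^\top\delta>0$).
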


This corollary provides the probabilistic guarantee for AEA. By reducing the posterior variance, we strictly increase the probability of selecting the correct role-aligned action. The following sections detail how we implement this via learned evidence models. The proofs of all the theorems and corollaries are given in the Appendix Section~\ref{sec:proofs}.

\subsection{Scalable Instantiation via Learned Evidence}
\label{sec:scalable_aea}

The theoretical guarantees of AEA are based on the existence of an extraction function $\mathcal{F}$ that satisfies the Information Gain Condition (Corollary \ref{cor:info_gain}). Crucially, this evidence $E_t$ is not defined by manual rules. Instead, we define AEA as a learned parameterized function $\mathcal{F}_\phi(\tau_{<t})$. We propose two instantiations to implement this function, motivated by the limitations of the agent's prior belief.

\paragraph{I. Self-Reflection (Bootstrapping).}
The simplest instantiation approximates $\mathcal{F}$ using the base model itself: $E_t \approx \mathcal{M}_{\text{base}}(\tau_{<t})$. However, this approach is theoretically limited by the Dominant Prior assumption (Theorem \ref{thm:role-collapse}). Since the pre-trained prior $P(\theta)$ is shared across roles and favors generic behaviors, querying the same model often resamples from the same uninformative posterior. If the base model's prior dictates that $P(\theta \mid i) \approx P(\theta \mid j)$, self-reflection frequently fails to break this symmetry, resulting in "hallucinated compliance" where the agent rationalizes its generic behavior rather than correcting it.

\paragraph{II. Weak-to-Strong Generalization.}
To resolve the negative impact of the dominant prior, we introduce a distinct evidence model $\mathcal{M}_{\omega}$ (e.g., a smaller open-sourced model) to parameterize $\mathcal{F}$ with $\omega$. Unlike the base model, $\mathcal{M}_{\omega}$ is not intended to solve the task but is trained via reinforcement learning specifically to maximize the mutual information $I(L^*; E_t \mid Y_t)$.
By optimizing $\omega$ on a distribution of successful/failure traces, this model learns to identify the specific latent variables $\theta$ that distinguish roles. These features often ignored by the generic prior of the large model. This ensures that the generated evidence $E_t$ explicitly targets the latent failure modes, transforming alignment from a manual engineering task into a scalable learning problem that strictly satisfies the Information Gain Condition.

\begin{table*}[t]
    \centering
    \caption{Main agentic evaluation on six benchmarks. For each base model, we compare Single Agent, Multi Agent (no alignment), Self-Reflection (AEA), and Weak-to-Strong (AEA). The best performance among two alignment methods for each benchmark is highlighted in \textbf{boldface}.}
    \resizebox{\textwidth}{!}{
    \begin{tabular}{l c c c c c c c c}
    \toprule
    \multirow{2}{*}{\textbf{Setting}} &
    \multirow{2}{*}{\textbf{Aligned}} &
    \multirow{2}{*}{\textbf{AEA Model}} &
    \multicolumn{6}{c}{\textbf{Datasets}} \\
    \cmidrule(lr){4-9}
    &&& \texttt{HumanEval} & \texttt{DataBench} & \texttt{Chemistry} & \texttt{Physics} & \texttt{AIME24} & \texttt{AIME25} \\
    \midrule
    
    \modelrow{\includegraphics[height=1em]{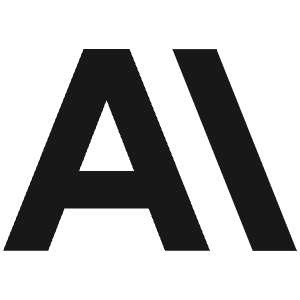} \textbf{Claude 3 Haiku}}
    
    Single Agent & \xmark & -- & 61.6 &  9.3 & 31.7 & 23.5 &  0.0 &  3.3 \\
    Multi Agent  & \xmark & -- & 79.8 & 11.7 & 33.6 & 36.7 & 10.0 &  3.3 \\
    \greycell{Self-Reflection (AEA)} & \greycell{\cmark} & \greycell{Claude 3 Haiku}
    & \greycell{72.0} & \greycell{\textbf{27.3}} & \greycell{31.7} & \greycell{41.2} & \greycell{16.7} & \greycell{13.3} \\
    \greycell{Weak-to-Strong (AEA)} & \greycell{\cmark} & \greycell{AEA-4B}
    & \greycell{\textbf{79.3}} & \greycell{23.9} & \greycell{\textbf{36.6}} & \greycell{\textbf{58.8}} & \greycell{\textbf{20.0}} & \greycell{\textbf{16.7}} \\
    \midrule
    
    \modelrow{\includegraphics[height=1em]{figures/anthropic.png} \textbf{Claude 3.5 Sonnet}}
    
    Single Agent & \xmark & -- & 82.3 & 27.3 & 47.6 & 35.3 & 16.7 &  6.7 \\
    Multi Agent  & \xmark & -- & 84.8 & 32.5 & 51.2 & 58.8 & 23.3 & 16.7 \\
    \greycell{Self-Reflection (AEA)} & \greycell{\cmark} & \greycell{Claude 3.5 Sonnet}
    & \greycell{\textbf{89.0}} & \greycell{29.7} & \greycell{54.9} & \greycell{60.2} & \greycell{23.3} & \greycell{\textbf{26.7}} \\
    \greycell{Weak-to-Strong (AEA)} & \greycell{\cmark} & \greycell{AEA-4B}
    & \greycell{86.6} & \greycell{\textbf{35.0}} & \greycell{\textbf{56.1}} & \greycell{\textbf{61.8}} & \greycell{\textbf{30.0}} & \greycell{\textbf{26.7}} \\
    \midrule
    
    \modelrow{\includegraphics[height=1em]{figures/anthropic.png} \textbf{Claude 3.7 Sonnet}}
    
    Single Agent & \xmark & -- & 84.8 & 15.2 & 61.5 & 58.8 & 23.3 & 13.3 \\
    Multi Agent  & \xmark & -- & 85.4 & 10.5 & 61.0 & 64.7 & 40.0 & 26.7 \\
    \greycell{Self-Reflection (AEA)} & \greycell{\cmark} & \greycell{Claude 3.7 Sonnet}
    & \greycell{89.6} & \greycell{29.3} & \greycell{63.4} & \greycell{67.6} & \greycell{16.7} & \greycell{16.7} \\
    \greycell{Weak-to-Strong (AEA)} & \greycell{\cmark} & \greycell{AEA-4B}
    & \greycell{\textbf{90.9}} & \greycell{\textbf{35.6}} & \greycell{\textbf{71.4}} & \greycell{\textbf{68.4}} & \greycell{\textbf{40.0}} & \greycell{\textbf{46.6}} \\
    \midrule
    
    \modelrow{\includegraphics[height=1em]{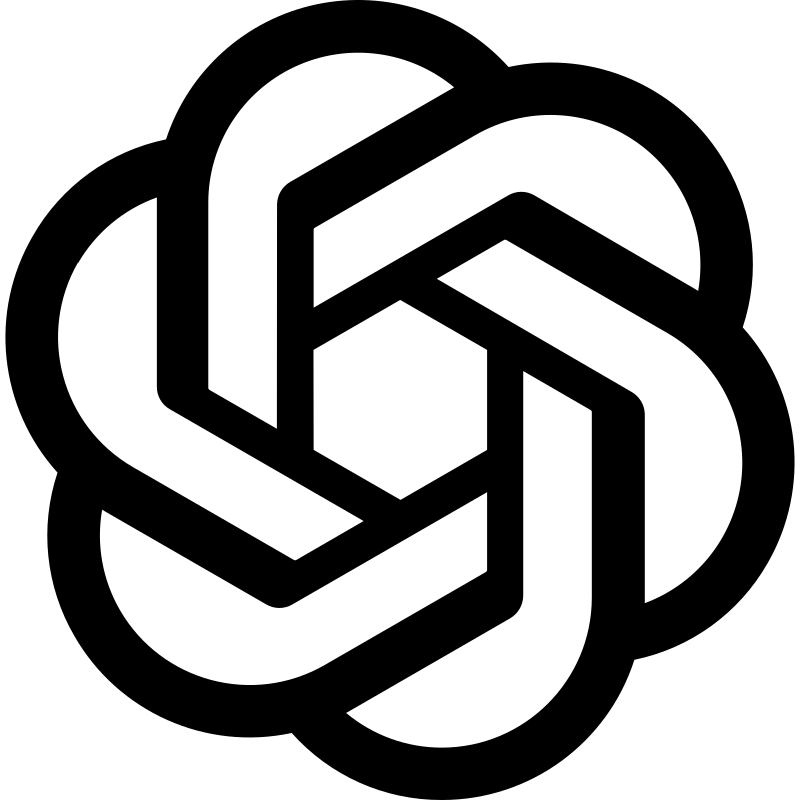} \textbf{GPT-4o-mini}}
    
    Single Agent & \xmark & -- & 91.5 & 26.2 & 61.0 & 67.6 & 13.3 & 13.3 \\
    Multi Agent  & \xmark & -- & 92.0 & 28.8 & 58.5 & 61.8 & 16.7 & 26.7 \\
    \greycell{Self-Reflection (AEA)} & \greycell{\cmark} & \greycell{GPT-4o-mini}
    & \greycell{\textbf{93.3}} & \greycell{27.3} & \greycell{55.9} & \greycell{\textbf{61.8}} & \greycell{6.7} & \greycell{20.0} \\
    \greycell{Weak-to-Strong (AEA)} & \greycell{\cmark} & \greycell{AEA-4B}
    & \greycell{92.2} & \greycell{\textbf{29.7}} & \greycell{\textbf{65.9}} & \greycell{\textbf{61.8}} & \greycell{\textbf{23.3}} & \greycell{\textbf{30.0}} \\
    \midrule
    
    \modelrow{\includegraphics[height=1em]{figures/openai.png} \textbf{GPT-4o}}
    
    Single Agent & \xmark & -- & 95.7 & 27.3 & 73.2 & 82.4 & 16.7 & 10.0 \\
    Multi Agent  & \xmark & -- & 91.5 & 30.7 & 70.7 & 61.8 & 23.3 & 33.3 \\
    \greycell{Self-Reflection (AEA)} & \greycell{\cmark} & \greycell{GPT-4o}
    & \greycell{94.5} & \greycell{27.7} & \greycell{65.9} & \greycell{71.0} & \greycell{30.0} & \greycell{26.7} \\
    \greycell{Weak-to-Strong (AEA)} & \greycell{\cmark} & \greycell{AEA-4B}
    & \greycell{\textbf{96.3}} & \greycell{\textbf{31.2}} & \greycell{\textbf{73.2}} & \greycell{\textbf{76.5}} & \greycell{\textbf{40.0}} & \greycell{\textbf{33.3}} \\
    \bottomrule
    \end{tabular}
    }
    \label{tab:main}
    \end{table*}
    
\section{Experiments}

\subsection{Agentic Trace Generation}
\label{sec:agent-data}
To learn the generalizable agentic trace distribution of the MAS, we construct agentic trace reasoning data from four challenging agent benchmarks shown in the Appendix, Table~\ref{tab:datasets_construct}: \texttt{GAIA}~\citep{mialon2023gaia}, \texttt{AssistantBench}~\citep{yoran2024assistantbench}, \texttt{LiveBench}~\citep{whitelivebench}, and \texttt{Who\&When}~\citep{zhang2025which}. The first three benchmarks are originally designed for single-agent evaluation and focus on various challenging tasks such as coding, reasoning, tool use, and web interaction. In contrast, \texttt{Who\&When} is a native multi-agent benchmark that targets failure attribution in LLM-based multi-agent systems. However, the current scale of these benchmarks is limited and insufficient on its own to learn generalizable agentic evidence signals and to reflect the real distributional behavior of agents. We therefore leverage all four benchmarks to construct a unified agentic reasoning dataset by running them under automated workflows and collecting annotated execution traces.

For each task, we generate the trajectories in the json format. We first use Claude-4 Opus \citep{claude3} to produce initial pseudo-annotations that identify the failing agent, the decisive step, and a candidate repair action. Similar to \citet{zhang2025which}, these annotations are then reviewed by a team of 5 human experts, who verify the correctness of the failure attribution, adjust the identified steps when necessary, and rewrite prompt optimization suggestions to ensure they are executable and role-consistent. The final dataset contains human-verified agentic traces with structured evidence labels, which are used to train the AEA evidence model. Details of the MAS implementation are provided in Section~\ref{sec:imp_details}.

\subsection{Evaluation}
To evaluate the performance of both single-agent and multi-agent settings, we use six human-level challenging benchmarks that span a wide range of difficult tasks. \texttt{HumanEval} \citep{Chen2021EvaluatingLL} tests the generation of functional code with human verification. \texttt{DataBench} \citep{grijalba2024question} focuses on tool use and decision making in realistic tabular data analysis. In \texttt{DataBench}, we only provide the link for the tabular data, which significantly increases the difficulty. SciBench \citep{wang2024scibench} provides two scientific reasoning tracks in \texttt{Chemistry} and \texttt{Physics} that require accurate, grounded problem solving. \texttt{AIME24} \citep{huggingface_aime_2024} and \texttt{AIME25} \citep{huggingface_aime_2025} measure mathematical reasoning with competition-level difficulty. For performance evaluation, we use LLM-as-a-Judge with GPT-4o \citep{achiam2023gpt} to compare ground truth answers and multi-agent results. The evaluation prompts are provided in the Appendix~\ref{sec:prompts}.

\subsection{Implementation Details}
\label{sec:imp_details}

\noindent \textbf{Multi-Agent System.}
We implement MAS using CaptainAgent from the AG2 library \citep{wu2024autogen} for automated workflow generation. CaptainAgent constructs workflows by assigning roles to LLM agents and coordinating their turn-by-turn interactions through a shared memory and tool interface. Each agent receives a role description, observes the partial states of the system, and produces an action that updates the workflow. This design makes CaptainAgent a representative automated workflow system with the support of modular agent roles, sequential execution, and tool-based actions. This setup provides a natural testbed for studying misalignment in multi-agent workflows.

\noindent \textbf{Weak-to-Strong Training.}
We train AEA-4B by optimizing a reasoning model (using Qwen3-4B as the backbone) on multi-agent workflow traces via a two-stage procedure. Each training example contains the full workflow context (roles, turn-level dialog, tool calls, and results) and a target evidence output that specifies the failing agent, the decisive step, and a concrete correction. In the first stage, we apply a supervised warm start on the RM-R1~\citep{chen2025rm} dataset to align the model with standard preference representations. In the second stage, we apply GRPO-based reinforcement learning with verifiable rewards~\cite{shao2024deepseekmath}. The reward function is decomposed into four weighted components: (1) \textit{agent identification} (40\%), checking for exact name matches; (2) \textit{rating alignment} (30\%), penalizing deviations from ground-truth failure severity; (3) \textit{correction validity} (20\%), verifying that proposed fixes are non-trivial; and (4) \textit{reasoning completeness} (10\%), encouraging comprehensive failure attribution. Invalid JSON formats receive a fixed penalty of $-0.8$, with the total reward clamped to $[-1, 1]$. We set the learning rate to $1 \times 10^{-6}$ and the number of group rollouts to $G=7$. We maximize the GRPO objective:
\begin{align}
    \mathcal{J}(\omega) = \frac{1}{G} \sum_{i=1}^G   [ \min  ( \rho_i(\omega) A_i, \text{clip}(\rho_i(\omega), 1-\varepsilon, 1+\varepsilon) A_i  )  - \beta \mathbb{D}_{\text{KL}}(\mathcal{M}_{\omega} || \mathcal{M}_{\text{ref}})  ]
\end{align}
where $\mathcal{M}_{\omega}$ denotes the evidence model parameterized by $\omega$, $\rho_i(\omega) = \frac{\mathcal{M}_{\omega}(y_i|x)}{\mathcal{M}_{\text{old}}(y_i|x)}$ is the probability ratio with the old model policy, and $A_i$ is the group-relative advantage. We use a clipping range of $\varepsilon = 0.2$ and a KL penalty $\beta = 0.001$, relative to reference model $\mathcal{M}_{\text{ref}}$. The maximum context length is set to 12,288 for prompts and 6,144 for responses. We train 5 epochs with fixed prompts and a shared output schema to ensure the evidence model generalizes uniformly across different workflows.

\begin{table*}[!t]
\centering
\caption{Failure Attribution of LLM Multi-Agent Systems on Who\&When Dataset \citep{zhang2025which}. \textbf{Left:} All at Once. \textbf{Right:} Step by Step. Note that we only use the \textbf{All at Once} setting in our proposed AEA alignment method.}
\begin{minipage}{0.49\textwidth}
\centering
\resizebox{0.95\linewidth}{!}{
\begin{tabular}{l cc}
\toprule
\multirow{2}{*}{\textbf{Model}} & \multicolumn{2}{c}{\textbf{All at Once}} \\
\cmidrule(lr){2-3}
& \textbf{Agent Acc. (\%)} & \textbf{Step Acc. (\%)} \\
\midrule
\includegraphics[height=1em]{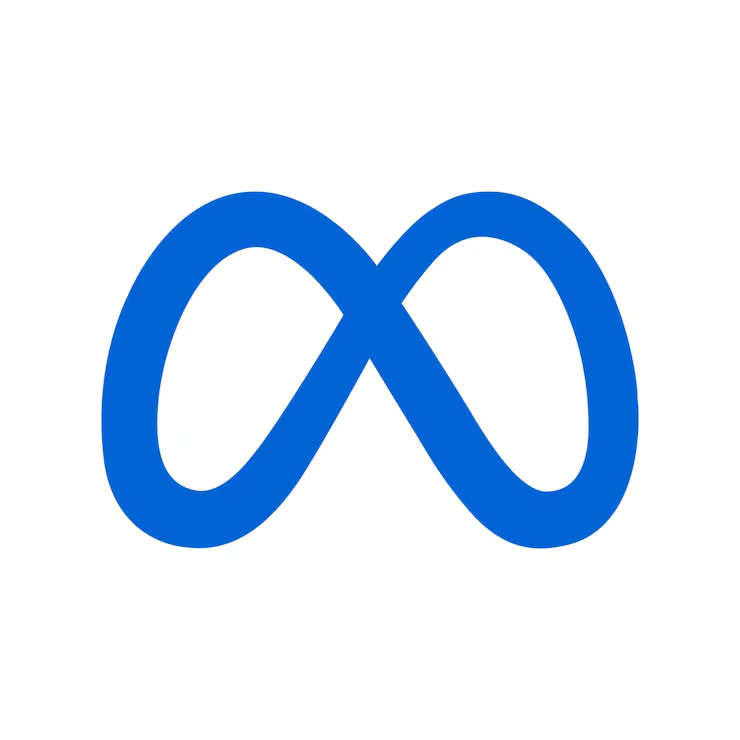}\ Llama-3.1-8B   & 9.52  & 2.38 \\
\includegraphics[height=1em]{figures/meta.png}\ Llama-3.1-70B  & 48.41 & 16.67 \\
\includegraphics[height=1em]{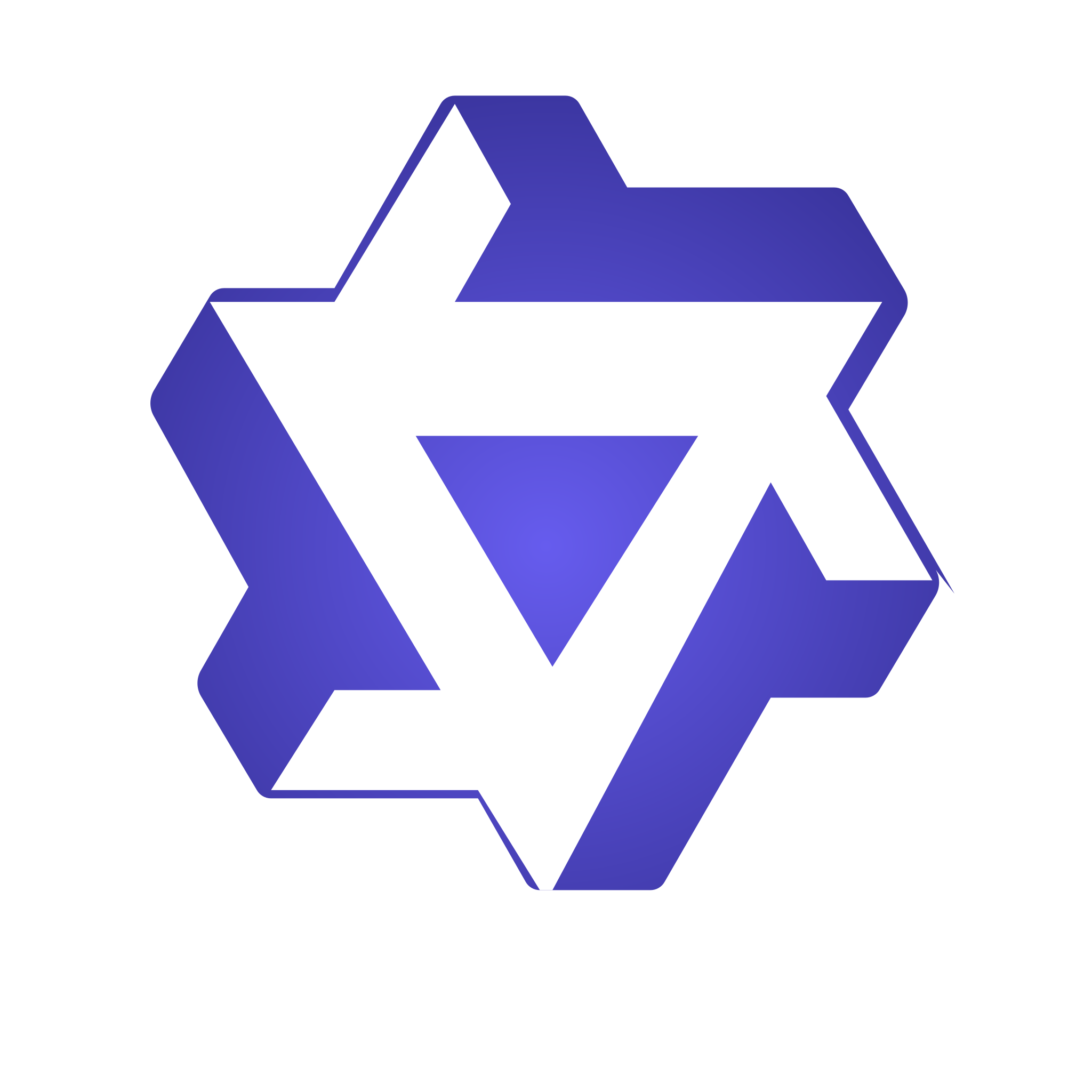}\ Qwen2.5-7B     & 10.32 & 5.56 \\
\includegraphics[height=1em]{figures/qwen.png}\ Qwen2.5-72B    & 55.56 & 10.32 \\
\includegraphics[height=1em]{figures/qwen.png}\ Qwen3-8B       & 31.75 & 5.56 \\
\includegraphics[height=1em]{figures/qwen.png}\ Qwen3-32B      & 53.97 & 15.87 \\
\includegraphics[height=1em]{figures/anthropic.png}\ Claude 3.5 Haiku   & 12.70 & 7.94 \\
\includegraphics[height=1em]{figures/anthropic.png}\ Claude 3.7 Sonnet  & 58.70 & 24.60 \\
\includegraphics[height=1em]{figures/openai.png}\ GPT-4o-mini    & 35.71 & 19.84 \\
\includegraphics[height=1em]{figures/openai.png}\ GPT-4o    & 50.00 & 30.95 \\
\midrule
AEA-4B & \textbf{60.79} & \textbf{32.70} \\
\bottomrule
\end{tabular}
}
\end{minipage}
\hfill
\begin{minipage}{0.49\textwidth}
\centering
\resizebox{0.95\linewidth}{!}{
\begin{tabular}{l cc}
\toprule
\multirow{2}{*}{\textbf{Model}} & \multicolumn{2}{c}{\textbf{Step by Step}} \\
\cmidrule(lr){2-3}
& \textbf{Agent Acc. (\%)} & \textbf{Step Acc. (\%)} \\
\midrule
\includegraphics[height=1em]{figures/meta.png}\ Llama-3.1-8B   & 24.60 & 13.49 \\
\includegraphics[height=1em]{figures/meta.png}\ Llama-3.1-70B  & 30.95 & 19.84 \\
\includegraphics[height=1em]{figures/qwen.png}\ Qwen2.5-7B     & 11.11 & 6.35 \\
\includegraphics[height=1em]{figures/qwen.png}\ Qwen2.5-72B    & 20.63 & 16.67 \\
\includegraphics[height=1em]{figures/qwen.png}\ Qwen3-8B       & 20.63  & 10.32 \\
\includegraphics[height=1em]{figures/qwen.png}\ Qwen3-32B      & 38.10 & 23.02 \\
\includegraphics[height=1em]{figures/anthropic.png}\ Claude 3.5 Haiku   & 21.43 & 16.67 \\
\includegraphics[height=1em]{figures/anthropic.png}\ Claude 3.7 Sonnet  & 46.03 & \textbf{30.95} \\
\includegraphics[height=1em]{figures/openai.png}\ GPT-4o-mini    & 30.95 & 19.05\\
\includegraphics[height=1em]{figures/openai.png}\ GPT-4o   & 33.33  & 26.43\\
\midrule
AEA-4B & \textbf{48.41}  & 25.87 \\
\bottomrule
\end{tabular}
}
\end{minipage}
\vspace{-10pt}
\label{tab:whoandwhen-side-by-side}
\end{table*}

\subsection{Main Results}
\label{sec:main_results}

\noindent \textbf{Inefficient agentic scaling in automated workflows.}
Table~\ref{tab:main} demonstrates that transitioning from a Single Agent to a Multi-Agent workflow generally improves performance, particularly for smaller base models.
For instance, Claude 3 Haiku sees a substantial jump on \texttt{HumanEval} (61.6\% to 79.8\%) and \texttt{Physics} (23.5\% to 36.7\%), suggesting that decomposition helps mitigate reasoning limitations. However, this ``Agentic Test-Time Scaling'' comes at a cost. As illustrated in Figure \ref{fig:time}, the multi-agent system incurs a massive computational overhead, with response times increasing by a factor of 12 to 13. More critically, this scaling exhibits fragility on complex tasks such as \texttt{Chemistry}, where unaligned workflows often yield diminishing returns. We hypothesize that without explicit alignment, increased interactions amplify the probability of ``decisive errors,'' where a single misaligned agent propagates incorrect constraints that poison the collaborative context.

\begin{wrapfigure}{r}{0.48\textwidth}
    \vskip -10pt
    \centering
    \includegraphics[width=0.47\textwidth]{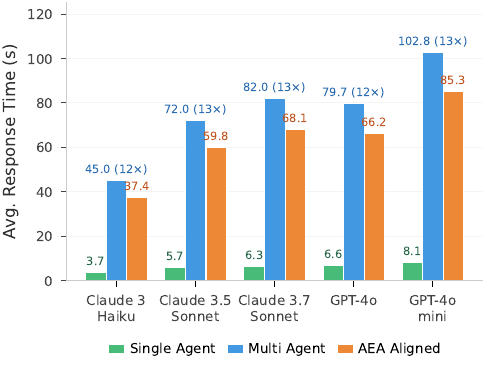}
    \caption{Average response time across agent configurations. Multi-agent systems incur 13$\times$ overhead compared to single-agent baselines.}
    \label{fig:time}
\end{wrapfigure}

\noindent \textbf{AEA effectively enforces role adherence.}
Our proposed method, AEA, consistently improves performance across six benchmarks, effectively reversing the degradation seen in original workflows. The gains are most pronounced in environments that require strict adherence to protocols, such as \texttt{DataBench} and \texttt{AIME}. This empirically supports our theoretical claim that failures are driven by \textit{posterior collapse}. In the absence of AEA, agents default to generic behaviors rather than navigating role-specific constraints. By injecting trace-based evidence, AEA contracts the variance of the agent's utility posterior, reducing role ambiguity and forcing actions to align with structural requirements. Case studies can be found at Appendix~\ref{sec:topo_analysis} and \ref{sec:case}.

\noindent \textbf{Weak-to-Strong breaks the Dominant Prior.}
A key finding is the distinct behavior of Weak-to-Strong generalization compared to Self-Reflection. As shown in Figure \ref{fig:rating_step}, the rating distribution for Self-Reflection is heavily skewed towards high scores (4 and 5), even when the system fails. This empirically validates the Dominant Prior assumption (Theorem \ref{thm:role-collapse}). Because the self-reflecting agent shares the same pre-trained prior as the acting agent, it tends to ``rationalize'' decisive errors rather than correcting them, leading to hallucinated compliance. In contrast, AEA (via Weak-to-Strong) produces a more discriminative distribution of ratings, effectively detecting and mitigating misalignments that the base model ignores.

\noindent \textbf{AEA enables robust error correction.}
We further analyze the impact of AEA in Figure \ref{fig:error_correction} by decomposing the results into three types: (1) \textbf{Fixed} (Incorrect $\to$ Correct), where the alignment successfully repairs a failure; (2) \textbf{Preserved} (Correct $\to$ Correct), where the correct solution is maintained; and (3) \textbf{Regressed} (Correct $\to$ Incorrect), where the alignment breaks a previously correct solution. The results reveal that Self-Reflection suffers from a much higher regression rate, frequently overriding correct reasoning with generic hallucinations. Conversely, AEA (Weak-to-Strong) significantly reduces regression and achieves a higher frequency of \textbf{Fixed} outcomes. This shows that orthogonal evidence from a specialized model is necessary to effectively correct decisive errors without compromising workflow stability.

\subsection{Automated Failure Attribution}
The efficacy of AEA is based on its ability to accurately identify \textit{which agent} and \textit{which step} in a workflow is failing. Table~\ref{tab:whoandwhen-side-by-side} validates this mechanism on the \texttt{Who\&When} benchmark. In ``All at Once'' setting, general-purpose models (even Llama-3.1-70B and GPT-4o) struggle significantly, indicating a lack of sensitivity to causal links in multi-agent failures. Conversely, our specialized AEA-4B model achieves the best accuracy (Step Accuracy of 32.70\% and Agent Accuracy of 60.79\%) on identifying agent failures, outperforming the general-purpose models that are significantly larger. Our method also shows competitive performance on the ``Step by Step'' setting. This result empirically confirms the \textit{Information Gain Condition} (Corollary \ref{cor:info_gain}), showing that specialized training enables AEA to extract the informative evidence required for agentic alignment.

\begin{figure}[!t]
    \centering
    \begin{subfigure}[b]{0.52\linewidth}
        \centering
        \includegraphics[width=\linewidth]{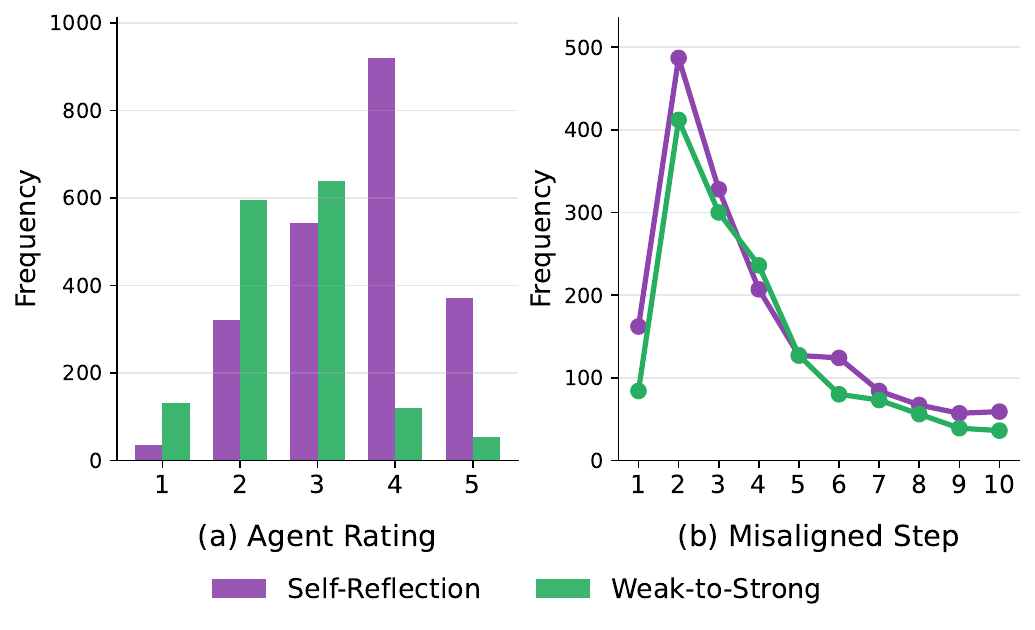}
        \caption{Rating and Step distributions. Self-Reflection skews towards higher ratings, indicating a failure to detect errors due to shared priors.}
        \label{fig:rating_step}
    \end{subfigure}
    \hfill
    \begin{subfigure}[b]{0.45\linewidth}
        \centering
        \includegraphics[width=\linewidth]{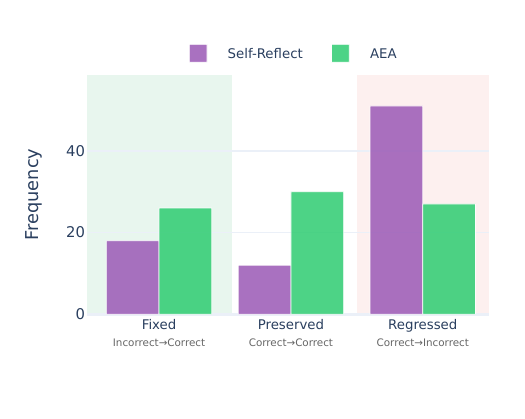}
        \caption{Error Correction Rate. Outcomes are Fixed (Incorrect $\to$ Correct), Preserved (Correct $\to$ Correct), and Regressed (Correct $\to$ Incorrect).}
        \label{fig:error_correction}
    \end{subfigure}
    \caption{Comparison between Self-Reflection and Weak-to-Strong Generalization. \textbf{(a)} Rating and Step distributions reveal that Self-Reflection rationalizes errors with high ratings due to the Dominant Prior. \textbf{(b)} Weak-to-Strong AEA significantly reduces regressions and increases the rate of fixed errors.}
    \label{fig:analysis_combined}
    \vskip -10pt
\end{figure}

\subsection{Empirical Validations on Theoretical Results}
\label{sec:info_gain_exp}

To test the effectiveness of the evidence, we use the decisive-error reduction $\Delta P_e = P_e(Y_t) - P_e(Y_t, E_t)$ as a behavioral surrogate for $I(L^*; E_t \mid Y_t)$ (monotone via Theorem~\ref{thm:fano}) and trace it along an \textit{evidence gradient} on GPT-4o: \textit{Naive Retry} (no new evidence), \textit{Generic Feedback} (role-agnostic message), \textit{Self-Reflection} (same-prior evidence), and \textit{AEA-4B} (external evidence model). As shown in Table~\ref{tab:evidence_gradient}, Naive Retry ($0.002$), Generic Feedback ($0.013$), and Self-Reflection ($0.005$) remain near zero, with Self-Reflection even regressing on \texttt{DataBench} ($-3.0$\,pp) and \texttt{Chemistry} ($-4.8$\,pp), consistent with the Dominant Prior pulling same-prior reflection toward rationalization. AEA-4B reaches $\Delta P_e = 0.071$, an order of magnitude above any alternative, confirming that the Information Gain Condition is the operational mechanism in practice: the gain from AEA is driven by genuinely informative evidence rather than extra compute or generic prompting. The same gradient also yields a practical diagnostic: when AEA repairs a failed trajectory, the bottleneck is \textit{missing evidence}. When it does not, the bottleneck is more likely \textit{missing capability}, indicating whether to invest in stronger base models or in a richer evidence. We discuss the broader efficiency-reliability trade-off and the relation to concurrent works in Appendix~\ref{sec:takeaways}. More empirical validation of theoretical results are provided in Appendix~\ref{sec:additional_qual}.

\begin{table}[!t]
\centering
\caption{Evidence gradient on GPT-4o. $\Delta P_e$ is the average decisive-error reduction across the four benchmarks. Compute alone or generic feedback move $\Delta P_e$ marginally; AEA-4B sits an order of magnitude higher, confirming the Information Gain Condition (Corollary~\ref{cor:info_gain}).}
\label{tab:evidence_gradient}
\resizebox{\textwidth}{!}{
\begin{tabular}{lcc cccc}
\toprule
\textbf{Setting} & \textbf{Evidence Content} & $\Delta P_e$ & \texttt{HumanEval} & \texttt{DataBench} & \texttt{Chemistry} & \texttt{AIME24} \\
\midrule
Multi-Agent (original) & --- & --- & 91.5 & 30.7 & 70.7 & 23.3 \\
Naive Retry & None & 0.002 & 92.1 & 31.5 & 71.2 & 22.0 \\
Generic Feedback & Uninformative & 0.013 & 93.0 & 32.3 & 72.0 & 24.0 \\
Self-Reflection & Moderate (same prior) & 0.005 & 94.5 & 27.7 & 65.9 & 30.0 \\
AEA-4B (W2S) & Informative & \textbf{0.071} & \textbf{96.3} & \textbf{35.0} & \textbf{73.2} & \textbf{40.0} \\
\bottomrule
\end{tabular}
}
\vspace{-10pt}
\end{table}

\section{Conclusion}

In this work, we present a study of agentic misalignment in automated workflows, characterizing it as a systematic failure of Bayesian inference over latent roles where agents' posterior collapse to generic proxy utilities. To address this, we introduced Agentic Evidence Attribution (AEA), a novel alignment framework for incorporating structured, role-specific evidence into the agent's decision process. Our empirical evaluation highlights two main conclusions. First, evidence-conditioned alignment is a powerful lever for improving multi-agent reliability, often yielding performance gains that test-time scaling cannot achieve. Second, the success of Weak-to-Strong generalization proves that small, specialized evidence models can effectively provide the orthogonal alignment signals needed to improve powerful automated workflows. This opens a promising avenue for scalable oversight in multi-agent systems, suggesting that we can build reliable automated workflows by coupling strong reasoning agents with specialized, evidence-focused aligners. We hope that this research can facilitate future strides toward trustworthy deployment of autonomous agentic systems.

\bibliographystyle{plainnat}
\bibliography{wenqian}

\begin{thebibliography}{46}
\providecommand{\natexlab}[1]{#1}
\providecommand{\url}[1]{\texttt{#1}}
\expandafter\ifx\csname urlstyle\endcsname\relax
  \providecommand{\doi}[1]{doi: #1}\else
  \providecommand{\doi}{doi: \begingroup \urlstyle{rm}\Url}\fi

\bibitem[Achiam et~al.(2023)Achiam, Adler, Agarwal, Ahmad, Akkaya, Aleman, Almeida, Altenschmidt, Altman, Anadkat, et~al.]{achiam2023gpt}
Josh Achiam, Steven Adler, Sandhini Agarwal, Lama Ahmad, Ilge Akkaya, Florencia~Leoni Aleman, Diogo Almeida, Janko Altenschmidt, Sam Altman, Shyamal Anadkat, et~al.
\newblock Gpt-4 technical report.
\newblock \emph{arXiv preprint arXiv:2303.08774}, 2023.

\bibitem[Aksitov et~al.(2024)Aksitov, Miryoosefi, Li, Li, Babayan, Kopparapu, Fisher, Guo, Prakash, Srinivasan, et~al.]{aksitov2024rest}
Renat Aksitov, Sobhan Miryoosefi, Zonglin Li, Daliang Li, Sheila Babayan, Kavya Kopparapu, Zachary Fisher, Ruiqi Guo, Sushant Prakash, Pranesh Srinivasan, et~al.
\newblock Rest meets react: Self-improvement for multi-step reasoning llm agent.
\newblock In \emph{ICLR 2024 Workshop on Large Language Model (LLM) Agents}, 2024.

\bibitem[Anthropic(2024)]{claude3}
Anthropic.
\newblock Introducing the next generation of claude, 2024.
\newblock URL \url{https://www.anthropic.com/news/claude-3-family}.

\bibitem[Bengio et~al.(2025)Bengio, Cohen, Fornasiere, Ghosn, Greiner, MacDermott, Mindermann, Oberman, Richardson, Richardson, et~al.]{bengio2025superintelligent}
Yoshua Bengio, Michael Cohen, Damiano Fornasiere, Joumana Ghosn, Pietro Greiner, Matt MacDermott, S{\"o}ren Mindermann, Adam Oberman, Jesse Richardson, Oliver Richardson, et~al.
\newblock Superintelligent agents pose catastrophic risks: Can scientist ai offer a safer path?
\newblock \emph{arXiv preprint arXiv:2502.15657}, 2025.

\bibitem[Brown et~al.(2024)Brown, Juravsky, Ehrlich, Clark, Le, R{\'e}, and Mirhoseini]{brown2024large}
Bradley Brown, Jordan Juravsky, Ryan Ehrlich, Ronald Clark, Quoc~V Le, Christopher R{\'e}, and Azalia Mirhoseini.
\newblock Large language monkeys: Scaling inference compute with repeated sampling.
\newblock \emph{arXiv preprint arXiv:2407.21787}, 2024.

\bibitem[Cemri et~al.(2025)Cemri, Pan, Yang, Agrawal, Chopra, Tiwari, Keutzer, Parameswaran, Klein, Ramchandran, et~al.]{cemri2025multi}
Mert Cemri, Melissa~Z Pan, Shuyi Yang, Lakshya~A Agrawal, Bhavya Chopra, Rishabh Tiwari, Kurt Keutzer, Aditya Parameswaran, Dan Klein, Kannan Ramchandran, et~al.
\newblock Why do multi-agent llm systems fail?
\newblock \emph{arXiv preprint arXiv:2503.13657}, 2025.

\bibitem[Chen et~al.(2021)Chen, Tworek, Jun, Yuan, Pond{\'e}, Kaplan, Edwards, Burda, Joseph, Brockman, Ray, Puri, Krueger, Petrov, Khlaaf, Sastry, Mishkin, Chan, Gray, Ryder, Pavlov, Power, Kaiser, Bavarian, Winter, Tillet, Such, Cummings, Plappert, Chantzis, Barnes, Herbert-Voss, Guss, Nichol, Babuschkin, Balaji, Jain, Carr, Leike, Achiam, Misra, Morikawa, Radford, Knight, Brundage, Murati, Mayer, Welinder, McGrew, Amodei, McCandlish, Sutskever, and Zaremba]{Chen2021EvaluatingLL}
Mark Chen, Jerry Tworek, Heewoo Jun, Qiming Yuan, Henrique Pond{\'e}, Jared Kaplan, Harrison Edwards, Yura Burda, Nicholas Joseph, Greg Brockman, Alex Ray, Raul Puri, Gretchen Krueger, Michael Petrov, Heidy Khlaaf, Girish Sastry, Pamela Mishkin, Brooke Chan, Scott Gray, Nick Ryder, Mikhail Pavlov, Alethea Power, Lukasz Kaiser, Mo~Bavarian, Clemens Winter, Phil Tillet, Felipe~Petroski Such, David~W. Cummings, Matthias Plappert, Fotios Chantzis, Elizabeth Barnes, Ariel Herbert-Voss, William~H. Guss, Alex Nichol, Igor Babuschkin, Suchir Balaji, Shantanu Jain, Andrew Carr, Jan Leike, Josh Achiam, Vedant Misra, Evan Morikawa, Alec Radford, Matthew~M. Knight, Miles Brundage, Mira Murati, Katie Mayer, Peter Welinder, Bob McGrew, Dario Amodei, Sam McCandlish, Ilya Sutskever, and Wojciech Zaremba.
\newblock Evaluating large language models trained on code.
\newblock \emph{ArXiv}, abs/2107.03374, 2021.

\bibitem[Chen et~al.(2025)Chen, Li, Wang, Jin, Qian, Wang, Wang, Zhang, Zhang, Zhang, et~al.]{chen2025rm}
Xiusi Chen, Gaotang Li, Ziqi Wang, Bowen Jin, Cheng Qian, Yu~Wang, Hongru Wang, Yu~Zhang, Denghui Zhang, Tong Zhang, et~al.
\newblock Rm-r1: Reward modeling as reasoning.
\newblock \emph{arXiv preprint arXiv:2505.02387}, 2025.

\bibitem[Erdogan et~al.(2025)Erdogan, Furuta, Kim, Lee, Moon, Anumanchipalli, Keutzer, and Gholami]{erdogan2025planandact}
Lutfi~Eren Erdogan, Hiroki Furuta, Sehoon Kim, Nicholas Lee, Suhong Moon, Gopala Anumanchipalli, Kurt Keutzer, and Amir Gholami.
\newblock Plan-and-act: Improving planning of agents for long-horizon tasks.
\newblock In \emph{Forty-second International Conference on Machine Learning}, 2025.

\bibitem[Grijalba et~al.(2024)Grijalba, Lopez, Mart{\'\i}nez-C{\'a}mara, and Camacho-Collados]{grijalba2024question}
Jorge~Os{\'e}s Grijalba, L~Alfonso~Urena Lopez, Eugenio Mart{\'\i}nez-C{\'a}mara, and Jose Camacho-Collados.
\newblock Question answering over tabular data with databench: A large-scale empirical evaluation of llms.
\newblock In \emph{Proceedings of the 2024 Joint International Conference on Computational Linguistics, Language Resources and Evaluation (LREC-COLING 2024)}, pages 13471--13488, 2024.

\bibitem[Guo et~al.(2024)Guo, Chen, Wang, Chang, Pei, Chawla, Wiest, and Zhang]{guo2024large}
Taicheng Guo, Xiuying Chen, Yaqi Wang, Ruidi Chang, Shichao Pei, Nitesh~V Chawla, Olaf Wiest, and Xiangliang Zhang.
\newblock Large language model based multi-agents: A survey of progress and challenges.
\newblock In \emph{IJCAI}, 2024.

\bibitem[Han et~al.(2024)Han, Zhang, Yao, Jin, and Xu]{han2024llm}
Shanshan Han, Qifan Zhang, Yuhang Yao, Weizhao Jin, and Zhaozhuo Xu.
\newblock Llm multi-agent systems: Challenges and open problems.
\newblock \emph{arXiv preprint arXiv:2402.03578}, 2024.

\bibitem[HuggingFace(2025{\natexlab{a}})]{huggingface_aime_2024}
HuggingFace.
\newblock Aime 2024 benchmark (huggingfaceh4/aime\_2024).
\newblock \url{https://huggingface.co/datasets/HuggingFaceH4/aime_2024}, 2025{\natexlab{a}}.

\bibitem[HuggingFace(2025{\natexlab{b}})]{huggingface_aime_2025}
HuggingFace.
\newblock Aime 2025 benchmark (opencompass/aime2025).
\newblock \url{https://huggingface.co/datasets/opencompass/AIME2025}, 2025{\natexlab{b}}.

\bibitem[Karwowski et~al.(2024)Karwowski, Hayman, Bai, Kiendlhofer, Griffin, and Skalse]{karwowskigoodhart}
Jacek Karwowski, Oliver Hayman, Xingjian Bai, Klaus Kiendlhofer, Charlie Griffin, and Joar Max~Viktor Skalse.
\newblock Goodhart's law in reinforcement learning.
\newblock In \emph{The Twelfth International Conference on Learning Representations}, 2024.

\bibitem[Kim et~al.(2025)Kim, Gu, Park, Park, Schmidgall, Heydari, Yan, Zhang, Zhuang, Malhotra, et~al.]{kim2025towards}
Yubin Kim, Ken Gu, Chanwoo Park, Chunjong Park, Samuel Schmidgall, A~Ali Heydari, Yao Yan, Zhihan Zhang, Yuchen Zhuang, Mark Malhotra, et~al.
\newblock Towards a science of scaling agent systems.
\newblock \emph{arXiv preprint arXiv:2512.08296}, 2025.

\bibitem[Korbak et~al.(2022)Korbak, Perez, and Buckley]{korbak2022rl}
Tomasz Korbak, Ethan Perez, and Christopher Buckley.
\newblock Rl with kl penalties is better viewed as bayesian inference.
\newblock In \emph{Findings of the Association for Computational Linguistics: EMNLP 2022}, pages 1083--1091, 2022.

\bibitem[Lambert et~al.(2025)Lambert, Pyatkin, Morrison, Miranda, Lin, Chandu, Dziri, Kumar, Zick, Choi, et~al.]{lambert2025rewardbench}
Nathan Lambert, Valentina Pyatkin, Jacob Morrison, Lester James~Validad Miranda, Bill~Yuchen Lin, Khyathi Chandu, Nouha Dziri, Sachin Kumar, Tom Zick, Yejin Choi, et~al.
\newblock Rewardbench: Evaluating reward models for language modeling.
\newblock In \emph{Findings of the Association for Computational Linguistics: NAACL 2025}, pages 1755--1797, 2025.

\bibitem[Li et~al.(2024)Li, Xu, Mei, Hua, Rama, Raheja, Wang, Zhu, and Zhang]{li2024autoflow}
Zelong Li, Shuyuan Xu, Kai Mei, Wenyue Hua, Balaji Rama, Om~Raheja, Hao Wang, He~Zhu, and Yongfeng Zhang.
\newblock Autoflow: Automated workflow generation for large language model agents.
\newblock \emph{arXiv preprint arXiv:2407.12821}, 2024.

\bibitem[Liu et~al.(2025)Liu, Yao, Min, Cao, Hou, and Li]{liurm2025}
Yantao Liu, Zijun Yao, Rui Min, Yixin Cao, Lei Hou, and Juanzi Li.
\newblock Rm-bench: Benchmarking reward models of language models with subtlety and style.
\newblock In \emph{The Thirteenth International Conference on Learning Representations}, 2025.

\bibitem[Lynch et~al.(2025)Lynch, Wright, Larson, Ritchie, Mindermann, Hubinger, Perez, and Troy]{lynch2025agentic}
Aengus Lynch, Benjamin Wright, Caleb Larson, Stuart~J Ritchie, Soren Mindermann, Evan Hubinger, Ethan Perez, and Kevin Troy.
\newblock Agentic misalignment: How llms could be insider threats.
\newblock \emph{arXiv preprint arXiv:2510.05179}, 2025.

\bibitem[Ma et~al.(2025)Ma, Xie, Wang, Wang, Wu, Li, and Wang]{ma2025diagnosing}
Xuyan Ma, Xiaofei Xie, Yawen Wang, Junjie Wang, Boyu Wu, Mingyang Li, and Qing Wang.
\newblock Diagnosing failure root causes in platform-orchestrated agentic systems: Dataset, taxonomy, and benchmark.
\newblock \emph{arXiv preprint arXiv:2509.23735}, 2025.

\bibitem[MacKay(2002)]{mackay}
David J.~C. MacKay.
\newblock \emph{Information Theory, Inference \& Learning Algorithms}.
\newblock Cambridge University Press, USA, 2002.
\newblock ISBN 0521642981.

\bibitem[Mialon et~al.(2023)Mialon, Fourrier, Wolf, LeCun, and Scialom]{mialon2023gaia}
Gr{\'e}goire Mialon, Cl{\'e}mentine Fourrier, Thomas Wolf, Yann LeCun, and Thomas Scialom.
\newblock Gaia: a benchmark for general ai assistants.
\newblock In \emph{The Twelfth International Conference on Learning Representations}, 2023.

\bibitem[Plaat et~al.(2025)Plaat, Wong, Verberne, Broekens, and Van~Stein]{plaat2025multi}
Aske Plaat, Annie Wong, Suzan Verberne, Joost Broekens, and Niki Van~Stein.
\newblock Multi-step reasoning with large language models, a survey.
\newblock \emph{ACM Computing Surveys}, 2025.

\bibitem[Poddar et~al.(2024)Poddar, Wan, Ivison, Gupta, and Jaques]{poddar2024personalizing}
Sriyash Poddar, Yanming Wan, Hamish Ivison, Abhishek Gupta, and Natasha Jaques.
\newblock Personalizing reinforcement learning from human feedback with variational preference learning.
\newblock In \emph{Proceedings of the 38th International Conference on Neural Information Processing Systems}, pages 52516--52544, 2024.

\bibitem[Prasad et~al.(2024)Prasad, Koller, Hartmann, Clark, Sabharwal, Bansal, and Khot]{prasad2024adapt}
Archiki Prasad, Alexander Koller, Mareike Hartmann, Peter Clark, Ashish Sabharwal, Mohit Bansal, and Tushar Khot.
\newblock Adapt: As-needed decomposition and planning with language models.
\newblock In \emph{Findings of the Association for Computational Linguistics: NAACL 2024}, pages 4226--4252, 2024.

\bibitem[Qin et~al.(2024)Qin, Liang, Ye, Zhu, Yan, Lu, Lin, Cong, Tang, Qian, et~al.]{qin2024toolllm}
Yujia Qin, Shihao Liang, Yining Ye, Kunlun Zhu, Lan Yan, Yaxi Lu, Yankai Lin, Xin Cong, Xiangru Tang, Bill Qian, et~al.
\newblock Toolllm: Facilitating large language models to master 16000+ real-world apis.
\newblock In \emph{ICLR}, 2024.

\bibitem[R{\u{a}}dulescu et~al.(2020)R{\u{a}}dulescu, Mannion, Roijers, and Now{\'e}]{ruadulescu2020multi}
Roxana R{\u{a}}dulescu, Patrick Mannion, Diederik~M Roijers, and Ann Now{\'e}.
\newblock Multi-objective multi-agent decision making: a utility-based analysis and survey.
\newblock \emph{Autonomous Agents and Multi-Agent Systems}, 34\penalty0 (1):\penalty0 10, 2020.

\bibitem[Shao et~al.(2024)Shao, Wang, Zhu, Xu, Song, Bi, Zhang, Zhang, Li, Wu, et~al.]{shao2024deepseekmath}
Zhihong Shao, Peiyi Wang, Qihao Zhu, Runxin Xu, Junxiao Song, Xiao Bi, Haowei Zhang, Mingchuan Zhang, YK~Li, Yang Wu, et~al.
\newblock Deepseekmath: Pushing the limits of mathematical reasoning in open language models.
\newblock \emph{arXiv preprint arXiv:2402.03300}, 2024.

\bibitem[Skalse et~al.(2022)Skalse, Howe, Krasheninnikov, and Krueger]{skalse2022defining}
Joar Skalse, Nikolaus Howe, Dmitrii Krasheninnikov, and David Krueger.
\newblock Defining and characterizing reward gaming.
\newblock \emph{Advances in Neural Information Processing Systems}, 35:\penalty0 9460--9471, 2022.

\bibitem[Snell et~al.(2024)Snell, Lee, Xu, and Kumar]{snell2025scaling}
Charlie Snell, Jaehoon Lee, Kelvin Xu, and Aviral Kumar.
\newblock Scaling llm test-time compute optimally can be more effective than scaling model parameters.
\newblock \emph{arXiv preprint arXiv:2408.03314}, 2024.

\bibitem[Sohl-Dickstein(2023)]{sohldickstein20230309}
Jascha Sohl-Dickstein.
\newblock { The hot mess theory of AI misalignment: More intelligent agents behave less coherently }.
\newblock \url{https://sohl-dickstein.github.io/2023/03/09/coherence.html}, 2023.

\bibitem[Spaan(2012)]{spaan2012partially}
Matthijs~TJ Spaan.
\newblock Partially observable markov decision processes.
\newblock In \emph{Reinforcement learning: State-of-the-art}, pages 387--414. Springer, 2012.

\bibitem[Wang et~al.(2025)Wang, la~Tour, Watkins, Makelov, Chi, Miserendino, Wang, Rajaram, Heidecke, Patwardhan, et~al.]{wang2025persona}
Miles Wang, Tom~Dupr{\'e} la~Tour, Olivia Watkins, Alex Makelov, Ryan~A Chi, Samuel Miserendino, Jeffrey Wang, Achyuta Rajaram, Johannes Heidecke, Tejal Patwardhan, et~al.
\newblock Persona features control emergent misalignment.
\newblock \emph{arXiv preprint arXiv:2506.19823}, 2025.

\bibitem[Wang et~al.(2024)Wang, Hu, Lu, Zhu, Zhang, Subramaniam, Loomba, Zhang, Sun, and Wang]{wang2024scibench}
Xiaoxuan Wang, Ziniu Hu, Pan Lu, Yanqiao Zhu, Jieyu Zhang, Satyen Subramaniam, Arjun~R Loomba, Shichang Zhang, Yizhou Sun, and Wei Wang.
\newblock Scibench: Evaluating college-level scientific problem-solving abilities of large language models.
\newblock In \emph{International Conference on Machine Learning}, pages 50622--50649. PMLR, 2024.

\bibitem[Weng(2024)]{weng2024rewardhack}
Lilian Weng.
\newblock Reward hacking in reinforcement learning.
\newblock \emph{lilianweng.github.io}, Nov 2024.
\newblock URL \url{https://lilianweng.github.io/posts/2024-11-28-reward-hacking/}.

\bibitem[White et~al.(2025)White, Dooley, Roberts, Pal, Feuer, Jain, Shwartz-Ziv, Jain, Saifullah, Dey, et~al.]{whitelivebench}
Colin White, Samuel Dooley, Manley Roberts, Arka Pal, Benjamin Feuer, Siddhartha Jain, Ravid Shwartz-Ziv, Neel Jain, Khalid Saifullah, Sreemanti Dey, et~al.
\newblock Livebench: A challenging, contamination-limited llm benchmark.
\newblock In \emph{The Thirteenth International Conference on Learning Representations}, 2025.

\bibitem[Wu et~al.(2024)Wu, Bansal, Zhang, Wu, Li, Zhu, Jiang, Zhang, Zhang, Liu, et~al.]{wu2024autogen}
Qingyun Wu, Gagan Bansal, Jieyu Zhang, Yiran Wu, Beibin Li, Erkang Zhu, Li~Jiang, Xiaoyun Zhang, Shaokun Zhang, Jiale Liu, et~al.
\newblock Autogen: Enabling next-gen llm applications via multi-agent conversations.
\newblock In \emph{First Conference on Language Modeling}, 2024.

\bibitem[Wu et~al.(2025)Wu, Sarthi, Zhao, Lee, Shandilya, Grobelnik, Choudhary, Huang, Subbian, Zhang, et~al.]{wu2025optimas}
Shirley Wu, Parth Sarthi, Shiyu Zhao, Aaron Lee, Herumb Shandilya, Adrian~Mladenic Grobelnik, Nurendra Choudhary, Eddie Huang, Karthik Subbian, Linjun Zhang, et~al.
\newblock Optimas: Optimizing compound ai systems with globally aligned local rewards.
\newblock \emph{arXiv preprint arXiv:2507.03041}, 2025.

\bibitem[Ye et~al.(2025)Ye, Zheng, and Zhang]{yerectifying}
Wenqian Ye, Guangtao Zheng, and Aidong Zhang.
\newblock Rectifying shortcut behaviors in preference-based reward learning.
\newblock In \emph{The Thirty-ninth Annual Conference on Neural Information Processing Systems}, 2025.

\bibitem[Yoran et~al.(2024)Yoran, Amouyal, Malaviya, Bogin, Press, and Berant]{yoran2024assistantbench}
Ori Yoran, Samuel Amouyal, Chaitanya Malaviya, Ben Bogin, Ofir Press, and Jonathan Berant.
\newblock Assistantbench: Can web agents solve realistic and time-consuming tasks?
\newblock In \emph{Proceedings of the 2024 Conference on Empirical Methods in Natural Language Processing}, pages 8938--8968, 2024.

\bibitem[Yuan et~al.(2025)Yuan, Song, Chen, Tan, Shen, Ren, Li, and Yang]{yuan2025easytool}
Siyu Yuan, Kaitao Song, Jiangjie Chen, Xu~Tan, Yongliang Shen, Kan Ren, Dongsheng Li, and Deqing Yang.
\newblock Easytool: Enhancing llm-based agents with concise tool instruction.
\newblock In \emph{Proceedings of the 2025 Conference of the Nations of the Americas Chapter of the Association for Computational Linguistics: Human Language Technologies (Volume 1: Long Papers)}, 2025.

\bibitem[Zhang et~al.(2025{\natexlab{a}})Zhang, Wang, Chen, Zhou, Wang, and Yan]{zhang2025agentracer}
Guibin Zhang, Junhao Wang, Junjie Chen, Wangchunshu Zhou, Kun Wang, and Shuicheng Yan.
\newblock Agentracer: Who is inducing failure in the llm agentic systems?
\newblock \emph{arXiv preprint arXiv:2509.03312}, 2025{\natexlab{a}}.

\bibitem[Zhang et~al.(2025{\natexlab{b}})Zhang, Xiang, Yu, Teng, Chen, Chen, Zhuge, Cheng, Hong, Wang, et~al.]{zhangaflow}
Jiayi Zhang, Jinyu Xiang, Zhaoyang Yu, Fengwei Teng, Xiong-Hui Chen, Jiaqi Chen, Mingchen Zhuge, Xin Cheng, Sirui Hong, Jinlin Wang, et~al.
\newblock Aflow: Automating agentic workflow generation.
\newblock In \emph{The Thirteenth International Conference on Learning Representations}, 2025{\natexlab{b}}.

\bibitem[Zhang et~al.(2025{\natexlab{c}})Zhang, Yin, Zhang, Liu, Han, Zhang, Li, Wang, Wang, Chen, and Wu]{zhang2025which}
Shaokun Zhang, Ming Yin, Jieyu Zhang, Jiale Liu, Zhiguang Han, Jingyang Zhang, Beibin Li, Chi Wang, Huazheng Wang, Yiran Chen, and Qingyun Wu.
\newblock Which agent causes task failures and when? on automated failure attribution of {LLM} multi-agent systems.
\newblock In \emph{Forty-second International Conference on Machine Learning}, 2025{\natexlab{c}}.

\end{thebibliography}

\clearpage
\appendix

\section*{{\Large Appendix}}



\section{Broader Impact}
\label{sec:impact}

This paper presents work whose goal is to advance the field of machine learning, specifically focusing on the safety and reliability of automated multi-agent systems (MAS). By formally characterizing \textit{agentic misalignment} and introducing Agentic Evidence Attribution (AEA) to mitigate it, our research aims to reduce the risks of uncoordinated behavior and reward hacking in increasingly autonomous workflows. As MAS are deployed in critical domains such as scientific research, software engineering, and decision support, ensuring that agents adhere to specific role-based constraints is essential for preventing unintended harmful consequences. While techniques that improve the controllability of agents could theoretically be leveraged to optimize malicious workflows, we believe that understanding the mechanisms of posterior collapse and providing tools for evidence-based alignment are fundamental prerequisites for building robust, trustworthy, and human-aligned AI systems.



\section{Limitations}
\label{sec:limitations}

AEA focuses on role misspecification in automated multi-agent workflows, which is one concrete form of agentic misalignment. Other failures, such as insufficient base-model capability, incorrect workflow decomposition, tool errors, or adversarial misuse, require different mechanisms and are outside the scope of this work. Our Bayesian formulation is intended to characterize observable coordination behavior, rather than to make claims about the internal representations of LLMs. The $\epsilon$-closeness assumption in Theorem~\ref{thm:role-collapse} is empirically supported in the homogeneous workflow setting studied in this paper (Appendix~\ref{sec:posterior_collapse_exp}), while heterogeneous workflows with different model families may induce different role-inference dynamics. AEA also adds a modest inference cost, about 6\% additional tokens in our experiments (Appendix~\ref{sec:cost_analysis}), and is expected to be most useful when errors are caused by role coordination rather than insufficient reasoning depth. Finally, the effectiveness of AEA depends on the extraction function $\mathcal{F}$ producing evidence that satisfies the Information Gain Condition. When the extracted evidence is not informative for the current workflow, AEA should be viewed as a diagnostic and corrective framework rather than a guaranteed solution to all agentic failures.

\section{Proofs of Theoretical Results}
\label{sec:proofs}

In this section, we provide full proofs for the theorems and corollaries presented in Section \ref{sec:theory} and \ref{sec:aea} of the main paper. 

\subsection{Proof of Theorem \ref{thm:role-collapse}}

\textbf{Theorem \ref{thm:role-collapse} ($\epsilon$-Stability of Role Posteriors).}
\textit{Let $i, j \in \mathcal{N}$ be two distinct roles. Assume the priors are $\epsilon_{\pi}$-close: $\| P(\cdot \mid i) - P(\cdot \mid j) \|_{\mathrm{TV}} \leq \epsilon_{\pi}$ and the likelihoods are $\epsilon_{\ell}$-close. Furthermore, assume the workflow evidence $Y$ is sufficiently informative such that the marginal likelihood is lower-bounded by $\zeta > 0$ for both roles. Then, the posterior distributions are $\delta$-close, where $\delta = \kappa (2\epsilon_{\pi} + \epsilon_{\ell})$ and $\kappa \propto \zeta^{-1}$. Consequently, the probability that agents $i$ and $j$ select distinct actions is upper-bounded:
\begin{equation}
    P(\hat{\pi}_i(Y) \neq \hat{\pi}_j(Y)) \leq \delta.
\end{equation}
If the oracle requirements differ ($\pi^*_i \neq \pi^*_j$), the probability of misalignment is lower-bounded by $1 - \delta$.}

\begin{proof}
    We derive the bound on the posterior distance. Let $P_k(\theta)$ and $L_k(\theta)$ denote the prior and likelihood for agent $k \in \{i, j\}$. The posterior is given by Bayes' rule: $P(\theta|Y, k) = L_k(\theta)P_k(\theta) / Z_k$, where $Z_k = \int L_k(\theta) dP_k(\theta)$ is the evidence.
    
    We first bound the difference in evidence $|Z_i - Z_j|$. Using the linearity of the integral and the triangular inequality:
    \begin{equation}
        |Z_i - Z_j| \leq  | \int L_i d(P_i - P_j)  | +  | \int (L_i - L_j) dP_j  |.
    \end{equation}
    Assuming likelihoods are bounded by $M$, the first term is bounded by $2M\epsilon_{\pi}$ and the second by $\epsilon_{\ell}$. Thus, the evidences are close.
    
    Next, we apply the perturbation bound for the ratio $\frac{A}{a} - \frac{B}{b} = \frac{A-B}{a} + B\frac{b-a}{ab}$. Letting $A=L_i P_i$ and $a=Z_i$, we obtain:
    \begin{equation}
        \| P(\cdot|i) - P(\cdot|j) \|_{\mathrm{TV}} \leq \frac{1}{Z_i} \| L_i P_i - L_j P_j \|_{\mathrm{TV}} + \frac{|Z_j - Z_i|}{Z_i Z_j}.
    \end{equation}
    Given the lower bound assumption $Z_k \geq \zeta$, the inverse terms are bounded by $\zeta^{-1}$. Grouping the error terms $\epsilon_{\pi}$ and $\epsilon_{\ell}$ yields a linear dependency scaled by the condition number $\kappa \approx \frac{M}{\zeta}$. Thus, strong generic priors (small $\epsilon_{\pi}$) and weak evidence (small $\zeta$) ensure posteriors remain $\delta$-close, leading to generic mode collapse. Hence, according to the definition of TV distance, we have the probability that agent $i$ and $j$  select distinct actions is also upper-bounded by $\delta$.
\end{proof}

\subsection{Proof of Theorem \ref{thm:fano}}

\textbf{Theorem \ref{thm:fano} (Lower Bound on Misalignment Error).}
\textit{Assume the action space has cardinality $|\mathcal{A}_i| = M$. Let $\hat{a}(Y)$ be any decision rule based on evidence $Y$. The probability of decisive error is lower-bounded by Fano's Inequality:
\begin{equation}
    P_e \geq 1 - \frac{I(L^*; Y) + \log 2}{\log M},
\end{equation}
where $I(L^*; Y)$ is the mutual information between the true optimal action label and the evidence.}

\begin{proof}
    We model the agent's decision process as a Markov chain $L^* \to Y \to \hat{L}$, where $L^*$ is the true optimal action and $\hat{L}$ is the agent's estimated action. Fano's Inequality relates the probability of error $P_e = P(\hat{L} \neq L^*)$ to the conditional entropy $H(L^* \mid Y)$:
    \begin{equation}
        H(L^* \mid Y) \leq H_b(P_e) + P_e \log(M - 1),
    \end{equation}
    where $H_b(P_e)$ is the binary entropy function. We apply two bounds:
    1. $H_b(P_e) \leq \log 2$ (the maximum entropy of a binary variable).
    2. $\log(M-1) < \log M$.
    Substituting these yields:
    \begin{equation}
        H(L^* \mid Y) \leq \log 2 + P_e \log M.
    \end{equation}
    By the definition of mutual information, $H(L^* \mid Y) = H(L^*) - I(L^*; Y)$. Assuming a uniform prior over the optimal actions (representing the worst-case uncertainty), $H(L^*) = \log M$. We substitute this into the inequality:
    \begin{equation}
        \log M - I(L^*; Y) \leq \log 2 + P_e \log M.
    \end{equation}
    Rearranging terms to solve for $P_e$:
    \begin{align}
        P_e \log M &\geq \log M - I(L^*; Y) - \log 2 \\
        P_e &\geq 1 - \frac{I(L^*; Y) + \log 2}{\log M}.
    \end{align}
    This completes the proof.
\end{proof}

\subsection{Proof of Corollary \ref{cor:info_gain}}

\textbf{Corollary \ref{cor:info_gain} (Information Gain Condition).}
\textit{LLet $P_e(Y)$ and $P_e(Y')$ be the probabilities of decisive error under baseline and AEA evidence, respectively. A necessary condition for AEA to strictly reduce the probabilities on misalignment (i.e., $P_e(Y') < P_e(Y)$) is
\begin{equation}
        I(L^*; E_t \mid Y_t) > 0
\end{equation}
}

\begin{proof}
    We compare the mutual information terms in the lower bound from Theorem \ref{thm:fano}. The updated evidence is $Y' = (Y, E)$. By the chain rule for mutual information:
    \begin{equation}
        I(L^*; Y, E) = I(L^*; Y) + I(L^*; E \mid Y).
    \end{equation}
    For the error bound to decrease, the subtracted term in Fano's inequality must increase:
    \begin{equation}
        \frac{I(L^*; Y') + \log 2}{\log M} > \frac{I(L^*; Y) + \log 2}{\log M}.
    \end{equation}
    This simplifies to $I(L^*; Y') > I(L^*; Y)$. Substituting the chain rule expansion, we require:
    \begin{equation}
        I(L^*; Y) + I(L^*; E \mid Y) > I(L^*; Y) \implies I(L^*; E \mid Y) > 0.
    \end{equation}
    Thus, the agentic evidence $E$ must provide strictly positive information about the optimal action $L^*$ conditioned on the existing workflow $Y$.
\end{proof}

\subsection{Proof of Theorem \ref{thm:covariance}}

\textbf{Theorem \ref{thm:covariance} (Covariance Contraction under Verifiable Evidence).}
\textit{Under the linear-Gaussian assumption, the posterior covariance $\Sigma_{Y'}$ after observing AEA evidence $E_t$ satisfies the Loewner partial order $\Sigma_{Y'} \preceq \Sigma_Y$. Specifically, the precision matrix increases by the Fisher information of the evidence:
\begin{equation}
    \Sigma_{Y'}^{-1} = \Sigma_Y^{-1} + \mathbf{H}^\top \mathbf{R}^{-1} \mathbf{H}.
\end{equation}}

\begin{proof}
    We rely on the standard Bayesian Linear Regression update equations for the precision matrix (inverse covariance). Let $\Lambda_Y = \Sigma_Y^{-1}$ be the precision of the belief given baseline evidence. The likelihood of the new evidence $E_t$ is Gaussian with covariance $\mathbf{R}$. The posterior precision $\Lambda_{Y'}$ is given by the sum of the prior precision and the observation precision (Fisher information):
    \begin{equation}
        \Sigma_{Y'}^{-1} = \Sigma_Y^{-1} + \mathbf{H}^\top \mathbf{R}^{-1} \mathbf{H}.
    \end{equation}
    To prove contraction ($\Sigma_{Y'} \preceq \Sigma_Y$), we must show that $\Sigma_Y - \Sigma_{Y'}$ is positive semi-definite (PSD). Equivalently, using the property that if $A \succeq B \succ 0$ then $B^{-1} \succeq A^{-1}$, we show $\Sigma_{Y'}^{-1} \succeq \Sigma_Y^{-1}$.
    Consider the term $\mathbf{Q} = \mathbf{H}^\top \mathbf{R}^{-1} \mathbf{H}$. Since $\mathbf{R}$ is a valid covariance matrix, it is symmetric positive definite ($\mathbf{R} \succ 0$), implying $\mathbf{R}^{-1} \succ 0$. For any non-zero vector $\mathbf{v} \in \mathbb{R}^d$:
    \begin{equation}
        \mathbf{v}^\top \mathbf{Q} \mathbf{v} = \mathbf{v}^\top \mathbf{H}^\top \mathbf{R}^{-1} \mathbf{H} \mathbf{v} = (\mathbf{H}\mathbf{v})^\top \mathbf{R}^{-1} (\mathbf{H}\mathbf{v}).
    \end{equation}
    Let $\mathbf{z} = \mathbf{H}\mathbf{v}$. Then $\mathbf{z}^\top \mathbf{R}^{-1} \mathbf{z} \geq 0$ because $\mathbf{R}^{-1}$ is positive definite. Thus, $\mathbf{Q} \succeq 0$.
    Since $\Sigma_{Y'}^{-1} = \Sigma_Y^{-1} + \mathbf{Q}$ and $\mathbf{Q} \succeq 0$, it follows that $\Sigma_{Y'}^{-1} \succeq \Sigma_Y^{-1}$. By inversion monotonicity, $\Sigma_{Y'} \preceq \Sigma_Y$.
\end{proof}

\subsection{Proof of Corollary \ref{cor:error_reduction}}

\textbf{Corollary \ref{cor:error_reduction} (Reliability Improvement via Variance Reduction).}
\textit{Consider two actions $a_1, a_2$ with feature difference $\delta = \phi(s,a_1) - \phi(s,a_2)$ under state $s$. If $a_1$ is optimal ($\mathbf{w}^\top \delta > 0$) and the evidence $E_t$ is unbiased (does not decrease the margin mean $\mu$), then $P_e(Y') \leq P_e(Y)$.}

\begin{proof}
    Let the random variable $D = \mathbf{w}^\top \delta$ represent the projected utility difference. Under the posterior $Y$, $D$ follows a univariate Gaussian distribution:
    \begin{equation}
        D \mid Y \sim \mathcal{N}(\mu_D, \sigma_D^2),   \text{where } \mu_D = \boldsymbol{\mu}_Y^\top \delta,   \sigma_D^2 = \delta^\top \Sigma_Y \delta.
    \end{equation}
    The agent commits an error if it samples a utility vector $\mathbf{w}$ such that $a_2$ appears better than $a_1$, i.e., $D < 0$. The probability of this event is given by the Cumulative Distribution Function (CDF) of the standard normal distribution, $\Phi$:
    \begin{equation}
        P_e(Y) = P(D < 0 \mid Y) = \Phi (-\frac{\mu_D}{\sigma_D} ).
    \end{equation}
    From Theorem \ref{thm:covariance}, we know $\Sigma_{Y'} \preceq \Sigma_Y$. By the definition of PSD matrices, this implies variance reduction along any direction $\delta$:
    \begin{equation}
        \sigma_{D'}^2 = \delta^\top \Sigma_{Y'} \delta \leq \delta^\top \Sigma_Y \delta = \sigma_D^2.
    \end{equation}
    Let $\sigma_{D'} \leq \sigma_D$. We assume unbiased evidence such that the posterior mean margin does not degrade: $\mu_{D'} \geq \mu_D > 0$.
    We analyze the argument of the CDF $\Phi(x)$. Since $\mu_D > 0$ and $\sigma_D > 0$:
    \begin{equation}
        \frac{\mu_{D'}}{\sigma_{D'}} \geq \frac{\mu_D}{\sigma_D} \implies -\frac{\mu_{D'}}{\sigma_{D'}} \leq -\frac{\mu_D}{\sigma_D}.
    \end{equation}
    Since $\Phi(z)$ is a strictly monotonically increasing function, a smaller (more negative) argument yields a smaller probability:
    \begin{equation}
        \Phi (-\frac{\mu_{D'}}{\sigma_{D'}} ) \leq \Phi (-\frac{\mu_D}{\sigma_D} ).
    \end{equation}
    Thus, $P_e(Y') \leq P_e(Y)$.
\end{proof}

\section{Quantitative Analyses of Posterior Collapse and AEA}
\label{sec:additional_qual}

The Bayesian framework in Section~\ref{sec:theory} produces several behavioral predictions that go beyond final-task accuracy. In this section, we report a series of controlled analyses that probe these predictions on the same multi-agent setup as Section~\ref{sec:imp_details}: how different roles converge to a shared policy, whether the evidence injected by AEA carries new information about the optimal action, how the posterior contracts under AEA, how AEA scales with workflow complexity, and how AEA compares against pure test-time scaling on the compute axis. These analyses characterize \textit{how} AEA changes agent behavior, not only \textit{whether} it improves accuracy.

\subsection{Posterior Collapse}
\label{sec:posterior_collapse_exp}

\noindent \textbf{Setup.}
We test whether unaligned agents actually behave as if they share a single policy across roles, the prediction made by Theorem~\ref{thm:role-collapse}. To make this prediction directly testable, we instantiate a three-role workflow on \texttt{AIME24} with maximally distinct roles: a \textit{Problem Solver} that solves the problem from scratch, a \textit{Solution Verifier} that checks the proposed solution \emph{without re-solving it}, and a \textit{Result Reporter} that only formats the final answer. We measure two behavioral quantities, both judged by GPT-4o with fixed rubrics. \textit{Pairwise Functional Overlap} measures, for every ordered pair of agents, how often they perform functionally equivalent operations across 90 comparisons; this serves as a behavioral proxy for $P(\hat{\pi}_i = \hat{\pi}_j)$. \textit{Role Action Accuracy} measures the fraction of turns whose action matches the assigned role description; a 50-sample human spot-check shows 96\% agreement with the judge.

\begin{table}[!t]
\centering
\caption{Behavioral signature of posterior collapse on \texttt{AIME24} with three maximally distinct roles. Without alignment, agents converge to a shared generic policy with high functional overlap and low role action accuracy, empirically confirming Theorem~\ref{thm:role-collapse}. AEA breaks this convergence, and Weak-to-Strong evidence is the most effective.}
\label{tab:posterior_collapse}
\begin{tabular}{lcc}
\toprule
\textbf{Setting} & \textbf{Functional Overlap (\%) $\downarrow$} & \textbf{Role Action Accuracy (\%) $\uparrow$} \\
\midrule
Multi-Agent (no alignment) & 58.3 & 34.2 \\
Self-Reflection (AEA) & 47.1 & 42.8 \\
Weak-to-Strong (AEA) & \textbf{29.5} & \textbf{59.6} \\
\bottomrule
\end{tabular}
\end{table}

\noindent \textbf{Unaligned agents collapse onto a shared policy.}
As shown in Table~\ref{tab:posterior_collapse}, even with maximally distinct role specifications, unaligned agents exhibit 58.3\% functional overlap while only 34.2\% of turns are role-aligned. This is exactly the behavior predicted by Theorem~\ref{thm:role-collapse}: under $\epsilon$-close priors and likelihoods, the dominant pre-training prior pulls all agents toward the same generic policy, regardless of their nominal role. Self-Reflection only mildly attenuates the collapse (47.1\% overlap, 42.8\% role accuracy), because the reflecting agent inherits the same prior that produced the collapse in the first place. In contrast, Weak-to-Strong AEA halves the overlap to 29.5\% and nearly doubles role action accuracy to 59.6\%. This empirically confirms that posterior collapse is the operative failure mode, and that breaking it requires a prior-distinct evidence channel rather than additional same-prior reasoning.

\subsection{Posterior Contraction with AEA}
\label{sec:covariance_exp}

\noindent \textbf{Setup.}
Theorem~\ref{thm:covariance} predicts that AEA tightens the posterior over the agent's latent utility, $\Sigma_{Y'} \preceq \Sigma_Y$. As an observable surrogate for this contraction, we re-run each of 30 randomly sampled tasks $k=5$ times under each setting, embed every final answer with a fixed sentence encoder, and measure the per-task variance of the embeddings around their per-task mean.

\begin{table}[!t]
\centering
\caption{Per-task variance of final-answer embeddings across $30$ tasks $\times$ $5$ repetitions. AEA roughly halves the variance, the behavioral analogue of the closed-form variance reduction $\Sigma_{Y'}^{-1} = \Sigma_Y^{-1} + \mathbf{H}^\top \mathbf{R}^{-1} \mathbf{H}$ in Theorem~\ref{thm:covariance}.}
\label{tab:covariance_contraction}
\begin{tabular}{lc}
\toprule
\textbf{Setting} & \textbf{Mean Output Embedding Variance} \\
\midrule
Multi-Agent (no alignment) & 0.0847 \\
Self-Reflection (AEA) & 0.0691 \\
Weak-to-Strong (AEA) & \textbf{0.0423} \\
\bottomrule
\end{tabular}
\end{table}

\noindent \textbf{Variance reduction matches the closed-form prediction.}
As shown in Table~\ref{tab:covariance_contraction}, the unaligned workflow yields a per-task variance of 0.0847, indicating that repeated runs of the same task produce systematically different answers, the empirical signature of a broad posterior. Self-Reflection contracts the variance modestly to 0.0691, since same-prior reflection only weakly constrains the posterior. Weak-to-Strong AEA roughly halves the unaligned variance, reaching 0.0423. This is the behavioral experiment of the closed-form variance reduction $\Sigma_{Y'}^{-1} = \Sigma_Y^{-1} + \mathbf{H}^\top \mathbf{R}^{-1} \mathbf{H}$ in Theorem~\ref{thm:covariance}: external verifiable evidence projects the latent utility onto observable constraints, sharpens the posterior, and produces more reproducible answers across runs.

\subsection{AEA Outperforms Test-Time Scaling at Lower Cost}
\label{sec:cost_analysis}

\noindent \textbf{Setup.}
A natural alternative to AEA is to spend the same additional compute on test-time scaling, such as majority voting or best-of-$K$ sampling~\citep{brown2024large,snell2025scaling}. To compare them on a common compute axis, we evaluate AEA against single-agent test-time scaling baselines on \texttt{AIME24}, \texttt{AIME25}, and \texttt{DataBench} using GPT-4o, and report the average total token consumption.

\begin{table}[!t]
\centering
\caption{Cost-normalized comparison on GPT-4o. AEA adds only $\sim$6\% token overhead, yet outperforms a strictly larger Best-of-$K$ budget on two of three benchmarks and matches it on the third.}
\label{tab:cost_analysis}
\begin{tabular}{lc ccc}
\toprule
\textbf{Method} & \textbf{Total Tokens (avg)} & \texttt{AIME24} & \texttt{AIME25} & \texttt{DataBench} \\
\midrule
Single Agent ($1\times$) & $\sim$\,950 & 16.7 & 10.0 & 27.3 \\
Majority Vote ($k{=}5$) & $\sim$\,4{,}750 & 26.7 & 16.7 & 31.9 \\
Best-of-$K$ ($k{=}15$) & $\sim$\,14{,}250 & 30.0 & 20.0 & 33.0 \\
Multi-Agent (no AEA) & $\sim$\,11{,}400 & 23.3 & 33.3 & 30.7 \\
Multi-Agent + AEA & $\sim$\,12{,}100 & \textbf{40.0} & \textbf{33.3} & \textbf{35.0} \\
\bottomrule
\end{tabular}
\end{table}

\noindent \textbf{Compute cannot substitute for evidence.}
As shown in Table~\ref{tab:cost_analysis}, AEA adds only $\sim$6\% token overhead over the unaligned multi-agent workflow ($\sim$12,100 vs.\ $\sim$11,400). Even when test-time scaling is given a strictly larger compute budget (Best-of-$K$ at $\sim$14,250 tokens), AEA still outperforms it on \texttt{AIME24} (40.0 vs.\ 30.0) and \texttt{DataBench} (35.0 vs.\ 33.0), and matches it on \texttt{AIME25}. The gain from AEA cannot be replicated by raw test-time compute. The bottleneck in these workflows is role coordination rather than reasoning depth, and pumping more samples through the same pre-training prior simply produces more confident misaligned answers, not aligned ones. This empirically supports the central claim that evidence-conditioned alignment is a more efficient lever than naive scaling.

\subsection{AEA Gains Scale with Workflow Complexity}
\label{sec:complexity_ablation}

\noindent \textbf{Setup.}
The Discussion in Appendix~\ref{sec:takeaways} argues that each additional agent enlarges the misalignment surface area, since each new agent introduces another decision node where the posterior may collapse. We test this prediction by grouping all GPT-4o evaluations from Table~\ref{tab:main} by the number of agents the orchestrator (CaptainAgent) instantiated for each task, and reporting the unaligned and AEA-aligned accuracy in each bucket.

\begin{table}[!t]
\centering
\caption{Effect of workflow complexity on GPT-4o. The unaligned baseline degrades as the number of agents grows, while the AEA gain rises monotonically. The largest gains appear in the most complex workflows.}
\label{tab:complexity_ablation}
\begin{tabular}{lcccc}
\toprule
\textbf{Agent Count} & \textbf{Tasks} & \textbf{Multi-Agent Acc.} & \textbf{AEA Acc.} & $\Delta$ \\
\midrule
2--3 agents & 415 & 52.3 & 57.8 & +5.5 \\
4--5 agents & 135 & 41.5 & 52.6 & +11.1 \\
6+ agents & 6 & 33.3 & 50.0 & +16.7 \\
\bottomrule
\end{tabular}
\end{table}

\noindent \textbf{The misalignment surface area widens with workflow size.}
As shown in Table~\ref{tab:complexity_ablation}, two trends emerge together. First, the unaligned multi-agent accuracy drops monotonically with agent count (52.3 $\to$ 41.5 $\to$ 33.3), confirming that scaling the workflow without alignment is actively harmful. Second, the AEA gain $\Delta$ rises monotonically ($+5.5 \to +11.1 \to +16.7$). The benefit of AEA therefore concentrates exactly where the unaligned baseline degrades the most. This is the empirical realization of the misalignment-surface-area argument: larger workflows have more decision nodes that can suffer from posterior collapse, and the value of an evidence channel grows accordingly.


\section{Standard Reward Evaluations}
\label{sec:reward_eval}
Our AEA evidence model is trained to produce role-specific, structured feedback for multi-agent traces. Since this training objective differs from standard reward modeling, we additionally evaluate whether the resulting model still behaves like a strong general-purpose reward reasoning model on widely used reward benchmarks. Concretely, we report results on RewardBench~\citep{lambert2025rewardbench} and RM-Bench~\citep{liurm2025}, which test preference judgment quality across chat helpfulness, safety, and reasoning, as well as math and code domains. \textit{Note that this evaluation is not our main target but only to verify the evidence model aligns with similar preference to human.}

Tables~\ref{tab:rewardbench} and~\ref{tab:rmbench} show that AEA-4B (trained with supervised warm-start followed by GRPO on our agentic trace data) maintains competitive performance compared to its underlying Qwen3 base model and to RM-R1 models \citep{chen2025rm} trained directly for reward reasoning. In particular, AEA-4B remains strong on the safety and reasoning categories, indicating that specializing the model for agentic evidence attribution does not collapse its standard preference evaluation ability. Overall, these results support the view that AEA can be used as a plug-in evidence model for workflow alignment without sacrificing the reward evaluation capability that is typically expected from reward reasoning models.

\begin{table*}[!t]
\centering
\caption{Reward Evaluation on Reward Reasoning Model (RewardBench)}
\label{tab:rewardbench}
\resizebox{0.85\textwidth}{!}{
\begin{tabular}{l l c | c c c c | c}
\toprule
Model & Training & Data & Chat & Chat Hard & Safety & Reasoning & Overall \\
\midrule
RM-R1(Qwen2.5-7B) & GRPO & RM-R1 & 94.69 & 74.34 & 85.00 & 86.11 & 85.03 \\
RM-R1(Qwen2.5-14B) & GRPO & RM-R1 & 93.57 & 79.38 & 87.70 & 91.40 & 88.01 \\
RM-R1(Qwen2.5-32B) & GRPO & RM-R1 & 93.29 & 82.02 & 90.95 & 94.72 & 90.25 \\
RM-R1(Qwen3-0.6B) & GRPO & RM-R1 & 80.72 & 60.52 & 63.37 & 49.64 & 63.56 \\
RM-R1(Qwen3-4B) & GRPO & RM-R1 & 91.62 & 75.87 & 88.11 & 86.52 & 86.13 \\ 
\bottomrule
\end{tabular}
}
\end{table*}

\begin{table*}[!t]
\centering
\caption{Reward Evaluation on Reward Reasoning Model (RM-Bench)}
\label{tab:rmbench}
\resizebox{\textwidth}{!}{
\begin{tabular}{l l c | c c c c c c c | c}
\toprule
Model & Training & Data & Chat & Math & Code & Safety & Easy & Normal & Hard & Avg \\
\midrule
RM-R1(Qwen2.5-7B) & GRPO  & RM-R1 & 67.26 & 69.41 & 55.26 & 92.18 & 80.96 & 71.87 & 60.26 & 71.03 \\
RM-R1(Qwen2.5-14B) & GRPO & RM-R1 & 75.10 & 76.68 & 60.18 & 85.46 & 81.17 & 75.23 & 66.67 & 74.36 \\
RM-R1(Qwen2.5-32B) & GRPO & RM-R1 & 75.27 & 81.39 & 64.81 & 93.67 & 85.69 & 81.01 & 69.65 & 78.79 \\
RM-R1(Qwen3-0.6B) & GRPO & RM-R1 & 55.04 & 61.73 & 53.65 & 63.24 & 73.75 & 58.36 & 43.14 & 58.42 \\
RM-R1(Qwen3-4B) & GRPO & RM-R1 & 70.63 & 72.19 & 57.21 & 92.28 & 81.44 & 73.99 & 63.75 & 73.06 \\ 
\bottomrule
\end{tabular}
}
\end{table*}


\section{Discussion}
\label{sec:takeaways}

\noindent \textbf{The Efficiency-Reliability Trade-off.}
Current trends in agentic AI emphasize scaling by increasing the number of agents and interaction depth~\citep{kim2025towards}. However, our results highlight a critical limitation. Scaling without alignment introduces a ``curse of dimensionality'' in reliability. Each additional agent introduces a new decision node where the generic posterior may fail. This effectively expands the misalignment surface area, defined as the cumulative probability that a local role deviation propagates into a global failure. AEA fundamentally alters this trade-off. It converts minor additional compute into posterior variance reduction. We find that paying this inference cost is far more efficient than naive scaling. Future architectures should prioritize the \textit{density of evidence}, ensuring $I(L^*; E \mid Y) > 0$.

\noindent \textbf{Capability vs. Evidence.}
Our Bayesian framework provides a rigorous diagnostic tool for distinguishing between two fundamental failure modes. When AEA intervention leads to success (a \textbf{Fixed} outcome), we can retrospectively attribute the initial failure to \textit{missing evidence} where the agent had the capability to solve the task but inferred the wrong utility function due to weak signals. Conversely, failures that persist even after AEA intervention likely represent \textit{missing capability}, indicating fundamental deficits in reasoning or domain knowledge. This separation clarifies whether resources should be allocated to pre-training stronger base models (to fix capability) or to engineering better runtime context and attribution signals (to fix evidence).

\noindent \textbf{Relation to Concurrent Works.}
With the rise of LLM-based agents, automated workflow generation has become a common way to build multi-agent systems (MAS). In parallel to our work, recent papers have started to study two directions that are necessary for making these systems reliable: (i) measuring and localizing failures inside a workflow, and (ii) understanding when scaling the number of agents and interaction steps actually helps. First, \citet{zhang2025which} introduce \texttt{Who\&When}, a benchmark that targets \textit{automatic failure attribution} in LLM multi-agent systems by asking models to identify which agent caused the failure and when it happened. Related efforts move from benchmarking to data generation: \citet{zhang2025agentracer} propose an automated pipeline for analyzing failed trajectories and producing structured annotations that can be used for debugging or training attribution models. In a complementary direction, \citet{kim2025towards} study the scaling behavior of agent systems and highlight that adding agents or steps is not a free win. Improvements depend on how coordination is organized and where errors accumulate in the workflow. Beyond these, there are also concurrent diagnostics~\citep{ma2025diagnosing} that analyze root causes in orchestrated agent platforms, which further supports the need for explicit failure localization rather than only scaling compute. Our goal is different from attribution or scaling alone: we argue that both should ultimately serve \textit{agentic alignment}. Concretely, we provide a bridge between (a) identifying decisive errors inside a trajectory and (b) correcting future decisions by changing the evidence available to the agent. This is why our framework treats attribution outputs as \textit{evidence} that contracts an agent's latent utility posterior, rather than as a standalone diagnosis. Empirically, this lets us turn failure attribution into a practical alignment mechanism (AEA), and it also explains why naive self-reflection can fail under a shared prior while weak-to-strong evidence can remain discriminative.

\begin{table*}[!t]
\centering
\small
\caption{Datasets used to generate agentic reasoning traces for training Agentic Evidence Attribution (AEA). The first three benchmarks (\texttt{GAIA}, \texttt{AssistantBench}, \texttt{LiveBench}) are single-agent evaluations that we convert into multi-agent runs, while \texttt{Who\&When} is a native multi-agent failure attribution benchmark. We then construct \textbf{Ours}, a larger multi-agent trace dataset with misalignment diagnostics and human-verified annotations.}
\label{tab:datasets_construct}
\resizebox{\textwidth}{!}{
\begin{tabular}{l c p{0.5\textwidth} c}
\toprule
Benchmark & Multi-Agent & Description & Samples \\
\midrule
\texttt{GAIA} \citep{mialon2023gaia} & \xmark  & Benchmark agent's reasoning, tool use, web browsing, and multi-modality in real-world tasks. & 165  \\ \midrule
\texttt{AssistantBench} \citep{yoran2024assistantbench} & \xmark & Benchmark web agents on realistic multi-step tasks across diverse domains. &  33 \\ \midrule
\texttt{LiveBench} \citep{whitelivebench} & \xmark & Benchmark with tasks in math, coding, reasoning, language, instruction following, and data analysis. & 1436 \\ \midrule
\texttt{Who\&When} \citep{zhang2025which} & \cmark & Benchmarking automatic failure attribution in multi‑agent LLM systems. & 184  \\ \midrule
\textbf{Ours} & \cmark & Multi-agent Misalignment Traces and Diagnostics & \textbf{1531} \\
\bottomrule
\end{tabular}
}
\end{table*}

\section{Prompt Engineering in Evaluation}
\label{sec:prompts}

The following prompts constitute the core interface for our evaluation pipeline. They are designed to strictly enforce structured outputs (JSON) to facilitate automated parsing and downstream feedback injection.

\subsection{AEA System Prompt}

This prompt serves as the inference interface for our fine-tuned evidence model (AEA-4B). It is designed to take the raw, multi-turn conversation history of a workflow and extract a structured failure attribution. Unlike standard chat prompts, this system instruction enforces a strict JSON schema that maps directly to the ``decisive error'' definition: identifying the specific agent role and step number where the trajectory diverged from optimality.

\begin{tcolorbox}[title=\textbf{AEA System Prompt}, colback=gray!10!white, colframe=black]
\small
You are an AI assistant that analyzes multi-agent conversations.

Your task is to analyze the conversation and identify issues with agent collaboration.

You MUST respond with a valid JSON object in this exact format:

\texttt{\{"rating": 1-5, \}}\\
\texttt{\{"agent\_name": "<agent>",\}}\\
\texttt{\{"step\_number": <int>,\}}\\
\texttt{\{"reason": "<text>",\}}\\
\texttt{\{"revised\_prompt": "<new prompt>"\}}

\textbf{Analysis guidelines:}
Rate collaboration 1-5 (1=poor, 5=excellent). Identify ONE agent that made the most critical mistake and the step number where it occurred. Note: Computer\_terminal is not an agent. Provide a brief reason and improved system prompt.
\end{tcolorbox}

\subsection{Self-Reflection Prompt}

To evaluate the ``Dominant Prior'' hypothesis, we utilize a Self-Reflection prompt that queries the backbone model (e.g., GPT-4o, Claude) to critique its own generated trace. This prompt mirrors the schema of the AEA prompt to ensure a fair comparison. Crucially, the model is provided with the full task context—including the problem statement, ground truth (for experimental analysis), and the agent's final answer—and acts as a reviewer to detect the first instance of critical failure.

\begin{tcolorbox}[title=\textbf{Self-Reflection Prompt Template}, colback=gray!10!white, colframe=black]
\small
You are analyzing a multi-agent conversation to identify collaboration issues and provide improvement suggestions.

\textbf{Task Context}\\
Problem: \{problem\}\\
Expected Answer: \{ground\_truth\}\\
Agent's Final Answer: \{agent\_answer\}

\textbf{Multi-Agent Conversation}\\
\{formatted\_history\}

\textbf{Your Analysis Task}\\
Analyze this multi-agent conversation and identify:

$\bullet$ Rate the overall collaboration quality (1-5) \\
$\bullet$ Identify the ONE agent that made the most critical mistake \\
$\bullet$ Find the step number where the mistake first occurred \\
$\bullet$ Provide a brief reason explaining the issue \\
$\bullet$ Suggest an improved system prompt for that agent \\

Note: ``computer\_terminal'' is a tool, not an agent. Focus on actual agents.

\textbf{Required Output Format}\\
Respond with a valid JSON object containing: rating, agent\_name, step\_number, reason, revised\_prompt
\end{tcolorbox}

\subsection{LLM-as-a-Judge Prompt}

To standardize performance metrics across our diverse benchmark suite, we employ a strong generalist model (GPT-4o) as an objective judge. This prompt converts the potentially verbose or unstructured outputs of the multi-agent system into a binary correctness label. The prompt is conditioned on the specific domain logic (e.g., checking mathematical derivations for \texttt{AIME} vs. code execution signatures for \texttt{HumanEval}) to ensure high-fidelity evaluation.

\begin{tcolorbox}[title=\textbf{LLM Judge Prompt Template}, colback=gray!10!white, colframe=black]
\small
You are evaluating a \{solution\_type\} from a multi-agent system. Please determine if the final answer is correct.

\textbf{Problem:} \{question\}

\textbf{Expected Answer:} \{expected\_answer\}

\textbf{Multi-Agent Response:} \{model\_response\}

Please evaluate if the multi-agent response correctly solves the problem. Look for:

$\bullet$ \textbf{\texttt{AIME}:} Correct reasoning, final numerical answer, valid approach\\
$\bullet$ \textbf{\texttt{HumanEval}:} Correct signature, implementation logic, edge cases\\
$\bullet$ \textbf{\texttt{DataBench}:} Correct answer, data analysis approach, reasoning\\
$\bullet$ \textbf{\texttt{SciBench}:} Correct scientific reasoning, calculations, principles

Respond with exactly one of: \textbf{CORRECT} (solution is right) or \textbf{INCORRECT} (solution is wrong)

\end{tcolorbox}

\begin{figure}[!t]
    \centering
    \begin{subfigure}[b]{\linewidth}
        \centering
        \includegraphics[width=\linewidth]{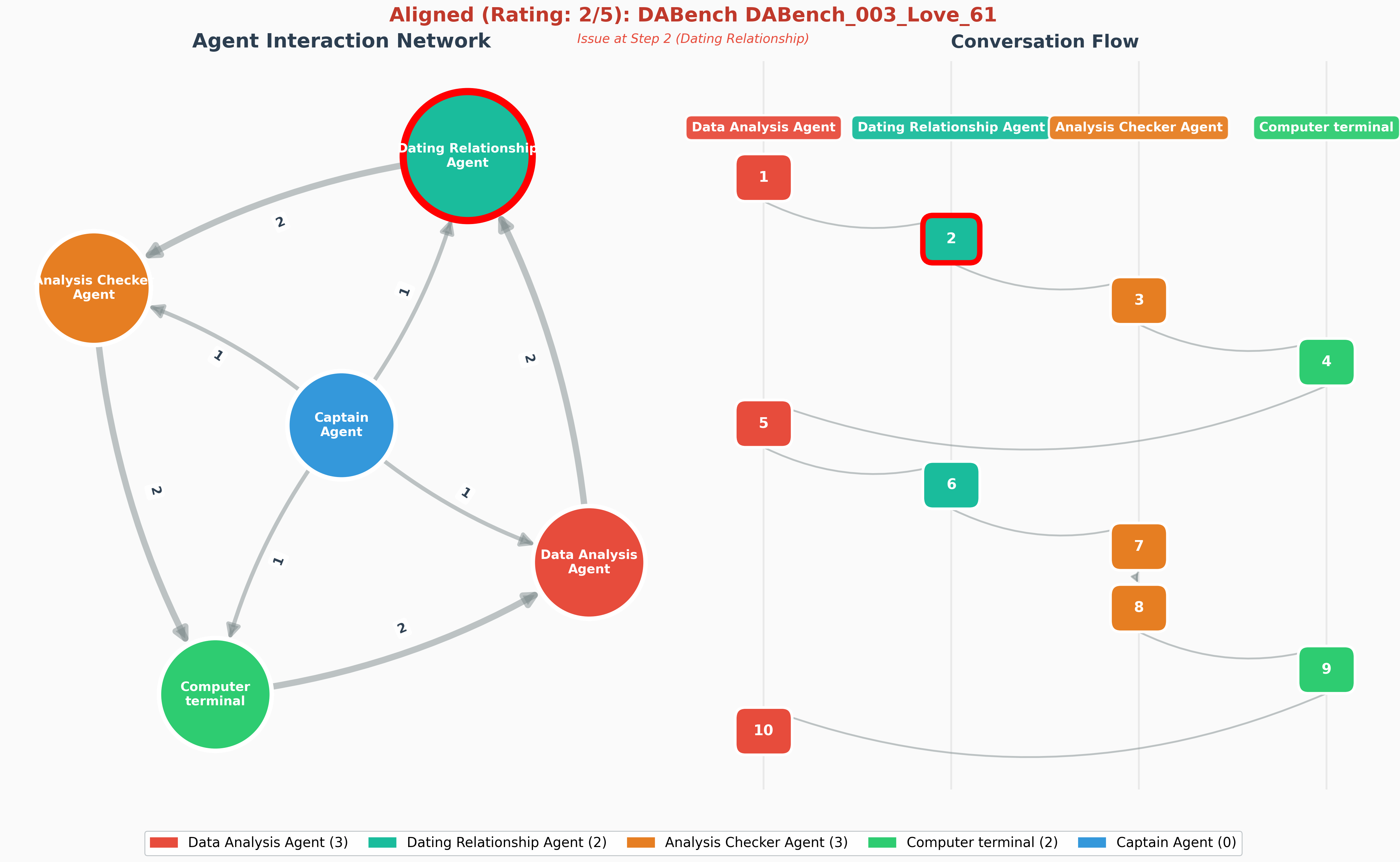}
        \caption{\textbf{AEA on Claude 3 Haiku (DataBench):} AEA successfully flags a decisive error at Step 2 (Red), assigning a low rating (2/5) and preventing misalignment propagation. \textcolor{red}{Red boarder} denotes the groundtruth misaligned step.}
        \label{fig:topo_aea}
    \end{subfigure}
    
    \vspace{0.3cm} 
    
    \begin{subfigure}[b]{\linewidth}
        \centering
        \includegraphics[width=\linewidth]{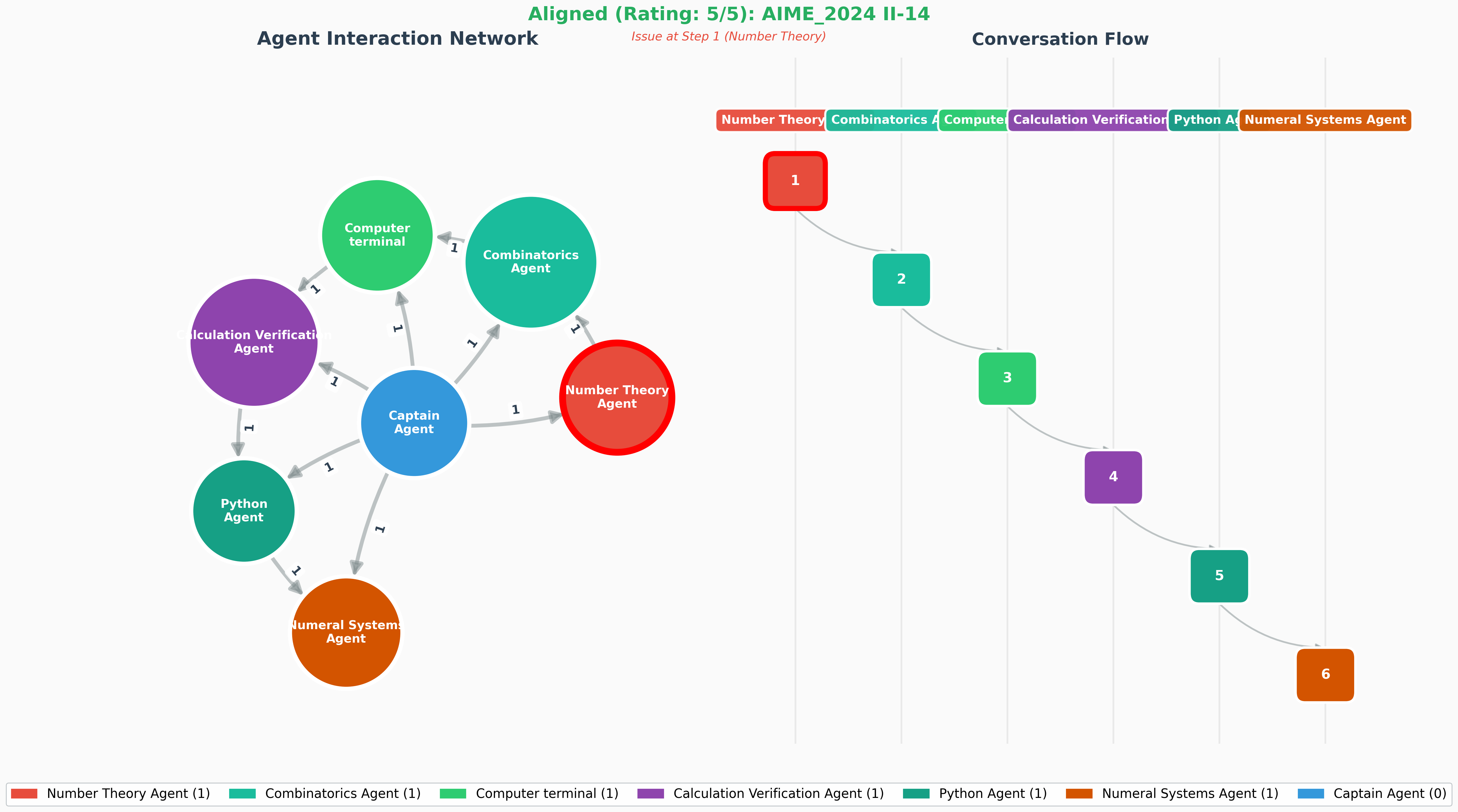}
        \caption{\textbf{Self-Reflection on GPT-4o (AIME):} The system exhibits \textit{Hallucinated Compliance}. Despite an issue at Step 1, the shared prior leads to a perfect rating (5/5), failing to catch the error. \textcolor{red}{Red boarder} denotes the groundtruth misaligned step.}
        \label{fig:topo_selfref}
    \end{subfigure}
    \caption{Topological comparison of failure modes. \textbf{(a)} AEA acts as a variance reducer, providing orthogonal evidence to identify latent failures. \textbf{(b)} Self-Reflection suffers from posterior collapse, validating incorrect trajectories due to the Dominant Prior.}
    \label{fig:topo_analysis}
\end{figure}

\section{Visualization on Topological Graphs}
\label{sec:topo_analysis}

To scrutinize the structural dynamics of misalignment, we visualize the workflow execution topology in Figure~\ref{fig:topo_analysis}. These graphs map the agent interaction network against the temporal conversation flow, allowing us to trace the propagation of decisive errors.

\noindent \textbf{AEA enables precise attribution.}
Figure~\ref{fig:topo_analysis} (Top) displays a \texttt{DataBench} workflow using Claude 3 Haiku aligned with AEA. The system encounters a decisive error at Step 2 by the \textit{Dating Relationship Agent}. In a standard workflow, this error propagates past the \textit{Analysis Checker} (Step 3). However, AEA correctly identifies the latent failure, assigning a low alignment rating ($2/5$) and pinpointing the specific node. This empirically demonstrates that AEA injects the necessary \textit{orthogonal evidence} (Corollary~\ref{cor:info_gain}) to break the error cascade.

\noindent \textbf{Self-Reflection induces posterior collapse.}
Conversely, Figure~\ref{fig:topo_analysis} (Bottom) illustrates a failure mode on \texttt{AIME} using GPT-4o. Despite an underlying issue at Step 1, the Self-Reflection mechanism assigns a perfect rating ($5/5$). This visualizes the \textbf{Dominant Prior} phenomenon (Theorem~\ref{thm:role-collapse}): because the checking mechanism shares the same prior as the acting agent, it cannot distinguish the error from valid reasoning. The system consequently ``rationalizes'' the mistake, collapsing into a misaligned state of \textit{hallucinated compliance}.

\clearpage

\section{Case Studies}
\label{sec:case}
We provide the case studies of the two instantiation of AEA. They are primarily colored in three types: (1) Fixed Error means the MAS answer turns from incorrect to correct, (2) Stayed Wrong means the MAS answer keep give the incorrect answer, and (3) Regression means the MAS turns correct answer to incorrect.

\subsection{Weak-to-Strong Cases}

\begin{tcolorbox}[title=\textbf{AEA}: Fixed Error (\texttt{AIME24}), colback=green!10!white, colframe=green!50!black]
\small
\textbf{Transition:} incorrect $\rightarrow$ correct

\textbf{Question:} Let $\mathcal{B}$ be the set of rectangular boxes with surface area $54$ and volume $23$. Let $r$ be the radius of the smallest sphere that can contain each of the rectangular boxes that are elements of $\mathcal{B}$. The value of $r^2$\ldots

\textbf{Original Answer:} The final answer is 45 (which is p + q, where $r^2 = 41/4$ with p = 41 and q = 4). \xmark

\textbf{Agent Flagged:} \texttt{Geometry\_Expert} at Step 4

\textbf{Alignment Feedback:} \textit{In step 4, the Geometry\_Expert incorrectly states that the dimensions $2 \times 2 \times 5.75$ give an exact value of $r^2 = 41/4$. However, the exact value is $r^2 = 41.0625/4$, which simplifies to $41/4$ only if $5.75$ is considered an exact fraction.}

\textbf{After Alignment:} Based on the expert conversation, the final answer is: $p + q = 721$. This is because $r^2 = 657/64$, where $p = 657$ and $q = 64$ are relatively prime. \cmark
\end{tcolorbox}

\begin{tcolorbox}[title=\textbf{AEA}: Fixed Error (\texttt{AIME25}), colback=green!10!white, colframe=green!50!black]
\small
\textbf{Transition:} incorrect $\rightarrow$ correct

\textbf{Question:} Six points $A, B, C, D, E$ and $F$ lie in a straight line in that order. Suppose that $G$ is a point not on the line and that $AC = 26$, $BD = 22$, $CE = 31$, $DF = 33$, $AF = 73$, $CG = 40$, and $DG = 30$. Find the area of $\triangle BGE$\ldots

\textbf{Original Answer:} The area of triangle $\triangle BGE$ is 468 square units. \xmark

\textbf{Agent Flagged:} \texttt{TriangleArea\_Expert} at Step 2

\textbf{Alignment Feedback:} \textit{The TriangleArea\_Expert provided a solution that directly calculated the area using coordinates and the formula, but failed to verify whether the coordinates were consistent with the given distances $CG = 40$ and $DG = 30$.}

\textbf{After Alignment:} The conversation involved calculating the area of triangle $BGE$ based on the coordinates of six points $A, B, C, D, E, F$ arranged in a straight line\ldots \cmark
\end{tcolorbox}

\begin{tcolorbox}[title=\textbf{AEA}: Fixed Error (\texttt{HumanEval}), colback=green!10!white, colframe=green!50!black]
\small
\textbf{Transition:} incorrect $\rightarrow$ correct

\textbf{Question:} \texttt{def odd\_count(lst):} Given a list of strings, where each string consists of only digits, return a list. Each element i of the output should be ``the number of odd elements in the string i of the input.'' where all the i's should be replaced\ldots

\textbf{Original Answer:} \texttt{def odd\_count(lst): ...} \xmark

\textbf{Agent Flagged:} \texttt{Verification\_Expert} at Step 6

\textbf{Alignment Feedback:} \textit{The Verification\_Expert correctly identified that the function passed the initial test cases, but did not mention the critical issue that the formatted string has an extra `n' in the phrase `the number of odd elements 4n the str4ng 4 of the 4nput.'}

\textbf{After Alignment:} Fixed implementation with correct string formatting. \cmark
\end{tcolorbox}

\begin{tcolorbox}[title=\textbf{AEA}: Fixed Error (\texttt{DataBench}), colback=green!10!white, colframe=green!50!black]
\small
\textbf{Transition:} incorrect $\rightarrow$ correct

\textbf{Question:} DATA ANALYSIS TASK: You are working with the Heart Disease Dataset. Your task is to answer the following question through data analysis. Question: What are the top 3 most common chest pain types? Expected answer type: list[category]\ldots

\textbf{Original Answer:} ['ASY', 'NAP', 'ATA'] \xmark

\textbf{Agent Flagged:} \texttt{HeartDisease\_Expert} at Step 3

\textbf{Alignment Feedback:} \textit{The HeartDisease\_Expert provided an incorrect classification of chest pain types. The correct classification for the UCI Heart Disease Dataset is: 0 = Typical Angina (TA), 1 = Atypical Angina (ATA), 2 = Non-Anginal Pain (NAP), 3 = Asymptomatic (ASY).}

\textbf{After Alignment:} [``Non-Anginal Pain (NAP)'', ``Asymptomatic (ASY)'', ``Atypical Angina (ATA)''] \cmark
\end{tcolorbox}

\begin{tcolorbox}[title=\textbf{AEA}: Fixed Error (\texttt{Chemistry}), colback=green!10!white, colframe=green!50!black]
\small
\textbf{Transition:} incorrect $\rightarrow$ correct

\textbf{Question:} Assume that all gases are perfect and that data refer to 298.15 K unless otherwise stated. A sample consisting of $3.00~\mathrm{mol}$ of diatomic perfect gas molecules at $200~\mathrm{K}$ is compressed reversibly\ldots

\textbf{Original Answer:} The heat transfer $q$ for the reversible adiabatic compression of the diatomic perfect gas is: $q = 200~\mathrm{J}$ \xmark

\textbf{Agent Flagged:} \texttt{Calorimetry\_Expert} at Step 2

\textbf{Alignment Feedback:} \textit{The Calorimetry\_Expert incorrectly stated that `In an adiabatic process, there is no heat exchange between the system and the surroundings, which means that the heat transfer, q, is zero.' This is factually correct, but the agent's reasoning for this\ldots}

\textbf{After Alignment:} The heat transfer $q$ for the adiabatic compression process is: $q = 0~\mathrm{J}$ \cmark
\end{tcolorbox}







\begin{tcolorbox}[title=\textbf{AEA}: Fixed Error (\texttt{HumanEval}), colback=green!10!white, colframe=green!50!black]
\small
\textbf{Transition:} incorrect $\rightarrow$ correct

\textbf{Question:} \texttt{def multiply(a, b):} Complete the function that takes two integers and returns the product of their unit digits. Examples: \texttt{multiply(148, 412)} should return 16.

\textbf{Original Answer:} \texttt{def multiply(a, b): unit\_digit\_a = abs(a) \% 10; ...} \xmark

\textbf{Agent Flagged:} \texttt{Software\_Expert} at Step 3

\textbf{Alignment Feedback:} \textit{The agent correctly implemented the solution using the modulus operator and absolute values, but the provided answer by the user does not match the expected output for negative numbers. The agent's implementation correctly handles negative numbers.}

\textbf{After Alignment:} Correct implementation using absolute value and modulus. \cmark
\end{tcolorbox}

\begin{tcolorbox}[title=\textbf{AEA}: Fixed Error (\texttt{HumanEval}), colback=green!10!white, colframe=green!50!black]
\small
\textbf{Transition:} incorrect $\rightarrow$ correct

\textbf{Question:} \texttt{def generate\_integers(a, b):} Given two positive integers a and b, return the even digits between a and b, in ascending order. For example: \texttt{generate\_integers(2, 8) => [2, 4, 6, 8]}

\textbf{Original Answer:} \texttt{def generate\_integers(a, b): ...} \xmark

\textbf{Agent Flagged:} \texttt{Python\_Expert} at Step 4

\textbf{Alignment Feedback:} \textit{The agent provided an implementation that uses a generator expression, but the initial problem statement and expected answer are not aligned with the solution. The expected answer is a list comprehension with explicit variable names.}

\textbf{After Alignment:} Correct implementation returning even digits in ascending order. \cmark
\end{tcolorbox}

\begin{tcolorbox}[title=\textbf{AEA}: Stayed Wrong (\texttt{AIME24}), colback=orange!10!white, colframe=orange!50!black]
\small
\textbf{Transition:} incorrect $\rightarrow$ incorrect

\textbf{Question:} Among the 900 residents of Aimeville, there are 195 who own a diamond ring, 367 who own a set of golf clubs, and 562 who own a garden spade. In addition, each of the 900 residents owns a bag of candy hearts. There are 437 residents who own exactly two\ldots

\textbf{Original Answer:} The number of residents of Aimeville who own all four items (a diamond ring, a set of golf clubs, a garden spade, and a bag of candy hearts) is 234. \xmark

\textbf{Agent Flagged:} \texttt{Statistics\_Expert} at Step 13

\textbf{Alignment Feedback:} \textit{The Statistics\_Expert incorrectly stated that $|A \cap B \cap C| = 234$ (residents owning exactly three items) is the same as $|A \cap B \cap C \cap D|$ (residents owning all four items). This is invalid reasoning because the 234 residents who own exactly three items\ldots}

\textbf{After Alignment:} The number of residents of Aimeville who own all four items is 234. \xmark
\end{tcolorbox}

\begin{tcolorbox}[title=\textbf{AEA}: Regression (\texttt{AIME24}), colback=red!10!white, colframe=red!50!black]
\small
\textbf{Transition:} correct $\rightarrow$ incorrect

\textbf{Question:} Find the number of ways to place a digit in each cell of a $2\times 3$ grid so that the sum of the two numbers formed by reading left to right is $999$, and the sum of the three numbers formed by reading top to bottom is $99$\ldots

\textbf{Original Answer:} 45 \cmark

\textbf{Agent Flagged:} \\ \texttt{ConstraintOptimization\_Expert} at Step 7

\textbf{Alignment Feedback:} \textit{The ConstraintOptimization\_Expert initially failed to correctly interpret the problem and the example provided, leading to an incorrect constraint analysis. However, after recognizing the error and revising their approach, they correctly identified the constraints.}

\textbf{After Alignment:} The number of ways to place a digit in each cell of a $2\times 3$ grid so that the sum of the two numbers formed by reading left to right is $999$, and the sum\ldots \xmark
\end{tcolorbox}

\subsection{Self-Reflection Cases}

\begin{tcolorbox}[title=\textbf{Self-Reflection}: Fixed Error (\texttt{AIME24}), colback=green!10!white, colframe=green!50!black]
\small
\textbf{Transition:} incorrect $\rightarrow$ correct

\textbf{Question:} Find the number of ways to place a digit in each cell of a $2\times 3$ grid so that the sum of the two numbers formed by reading left to right is $999$, and the sum of the three numbers formed by reading top to bottom is $99$\ldots

\textbf{Original Answer:} The experts concluded that there are no valid combinations of digits that satisfy both constraints of the problem. \xmark

\textbf{Agent Flagged:} \texttt{Algebra\_Expert} at Step 2

\textbf{Alignment Feedback:} \textit{The Algebra\_Expert failed to recognize the inconsistency between the provided example and the constraints early in the conversation, causing the agents to pursue a solution for a potentially unsolvable problem.}

\textbf{After Alignment:} The number of valid ways to fill the $2\times 3$ grid so that the sum of the two numbers formed by reading left to right is 999, and the sum of the three numbers\ldots \cmark
\end{tcolorbox}

\begin{tcolorbox}[title=\textbf{Self-Reflection}: Fixed Error (\texttt{AIME25}), colback=green!10!white, colframe=green!50!black]
\small
\textbf{Transition:} incorrect $\rightarrow$ correct

\textbf{Question:} On $\triangle ABC$ points $A,D,E$, and $B$ lie in that order on side $\overline{AB}$ with $AD=4$, $DE=16$, and $EB=8$. Points $A,F,G$, and $C$ lie in that order on side $\overline{AC}$ with $AF=13$, $FG=52$, and $GC=26$. Let $M$\ldots

\textbf{Original Answer:} 588 \xmark

\textbf{Agent Flagged:} \texttt{Verification\_Specialist} at Step 10

\textbf{Alignment Feedback:} \textit{The Verification\_Specialist correctly identified a discrepancy between the triangle decomposition and shoelace formula results, but misinterpreted it as a self-intersection problem rather than recognizing the correct area calculation method.}

\textbf{After Alignment:} The area of heptagon AFNBCEM is 1536. \cmark
\end{tcolorbox}

\begin{tcolorbox}[title=\textbf{Self-Reflection}: Fixed Error (\texttt{DataBench}), colback=green!10!white, colframe=green!50!black]
\small
\textbf{Transition:} correct $\rightarrow$ correct

\textbf{Question:} DATA ANALYSIS TASK: You are working with the FIFA Football Dataset. Your task is to answer the following question through data analysis. Question: What is the most common preferred foot amongst players? Expected answer type: category\ldots

\textbf{Original Answer:} Right \cmark

\textbf{Agent Flagged:} \texttt{None} at Step 0

\textbf{Alignment Feedback:} \textit{There were no critical mistakes in this collaboration. All agents worked effectively together, providing complementary expertise from different domains. The FIFA\_Data\_Analyst\_Expert led the discussion appropriately.}

\textbf{After Alignment:} Right \cmark
\end{tcolorbox}

\begin{tcolorbox}[title=\textbf{Self-Reflection}: Fixed Error (\texttt{Physics}), colback=green!10!white, colframe=green!50!black]
\small
\textbf{Transition:} incorrect $\rightarrow$ correct

\textbf{Question:} Assume all gases are perfect unless stated otherwise. Unless otherwise stated, thermodynamic data are for 298.15 K. Calculate the standard enthalpy of solution of $\mathrm{AgCl(s)}$ in water from the enthalpies of formation of the solid\ldots

\textbf{Original Answer:} The standard enthalpy of solution of AgCl(s) in water is 65.49 kJ/mol. \xmark

\textbf{Agent Flagged:} \texttt{None} at Step 0

\textbf{Alignment Feedback:} \textit{There were no critical mistakes in this conversation. All agents collaborated effectively to solve the problem. The Solution\texttt{Chemistry}\_Expert provided the initial approach, ThermodynamicCalculations\_Expert verified the calculation.}

\textbf{After Alignment:} The standard enthalpy of solution ($\Delta H^\circ_{\mathrm{sol}}$) of AgCl(s) in water is 64.71 kJ/mol. This was calculated using Hess's Law. \cmark
\end{tcolorbox}

\begin{tcolorbox}[title=\textbf{Self-Reflection}: Fixed Error (\texttt{DataBench}), colback=green!10!white, colframe=green!50!black]
\small
\textbf{Transition:} incorrect $\rightarrow$ correct

\textbf{Question:} DATA ANALYSIS TASK: You are working with the Taxi Trip Dataset. Your task is to answer the following question through data analysis. Question: Which payment type is the most common in the dataset? Expected answer type: category\ldots

\textbf{Original Answer:} Credit card \xmark

\textbf{Agent Flagged:} \texttt{TransportData\_Expert} at Step 2

\textbf{Alignment Feedback:} \textit{The TransportData\_Expert incorrectly set the foundation for analysis by claiming credit card (1) was the most common payment type without sufficient evidence. Their simulated analysis showing credit card at $\sim$60\% and cash at $\sim$40\%\ldots}

\textbf{After Alignment:} credit/debit card \cmark
\end{tcolorbox}







\begin{tcolorbox}[title=\textbf{Self-Reflection}: Stayed Wrong (\texttt{AIME24}), colback=orange!10!white, colframe=orange!50!black]
\small
\textbf{Transition:} incorrect $\rightarrow$ incorrect

\textbf{Question:} Let $A$, $B$, $C$, and $D$ be points on the hyperbola $\frac{x^2}{20} - \frac{y^2}{24} = 1$ such that $ABCD$ is a rhombus whose diagonals intersect at the origin. Find the greatest real number that is less than $BD^2$ for all such rhombi\ldots

\textbf{Original Answer:} 80 \xmark

\textbf{Agent Flagged:} \texttt{none} at Step 0

\textbf{Alignment Feedback:} \textit{The collaboration was excellent, with all agents contributing effectively to solving the problem. The GeometricConstraints\_Expert correctly set up the parametrization of points on the hyperbola, derived the constraint $t_A = -t_B$ for forming a rhombus.}

\textbf{After Alignment:} The greatest real number that is less than $BD^2$ for all such rhombi is 191.9999\ldots\ or simply 192. \xmark
\end{tcolorbox}

\begin{tcolorbox}[title=\textbf{Self-Reflection}: Regression (\texttt{AIME24}), colback=red!10!white, colframe=red!50!black]
\small
\textbf{Transition:} correct $\rightarrow$ incorrect

\textbf{Question:} Consider the paths of length $16$ that follow the lines from the lower left corner to the upper right corner on an $8 \times 8$ grid. Find the number of such paths that change direction exactly four times, as in the examples shown below\ldots

\textbf{Original Answer:} 294 \cmark

\textbf{Agent Flagged:} \texttt{None} at Step 0

\textbf{Alignment Feedback:} \textit{The collaboration was excellent, with agents working together effectively to solve the problem. The CombinatorialMath\_Expert and PathCounting\_Expert both provided thorough mathematical analyses that complemented each other, verified their work through multiple approaches.}

\textbf{After Alignment:} The number of paths of length 16 that follow the lines from the lower left corner to the upper right corner on an $8 \times 8$ grid and change direction exactly four times\ldots \xmark
\end{tcolorbox}


\end{document}